
\documentclass[sigconf, nonacm]{acmart}
\usepackage{booktabs} 
\usepackage{color} 
\usepackage{balance} 
\usepackage{url} 
\usepackage{siunitx} 
\usepackage{epsfig,tabularx,subfigure,multirow, graphicx}
\usepackage{amsthm,amsmath}  
\usepackage{enumitem}

\usepackage{warning} 

\usepackage{hyperref} 

\usepackage[nolist,printonlyused,withpage]{acronym} 

\let\oldbibliography\thebibliography \renewcommand{\thebibliography}[1]{%
	\oldbibliography{#1}%
	\setlength{\itemsep}{0pt}%
}

\usepackage[linesnumbered,ruled,vlined]{algorithm2e}
\SetKwRepeat{Do}{do}{while}
\SetCommentSty{mycommfont}

\long\def\comment#1{}

\setlength{\algomargin}{1em} 
\setlength{\textfloatsep}{1ex} 

\newcommand{\nop}[1]{}

\newtheorem{theorem}{\bf Theorem}[section]
\newtheorem{lemma}{\bf Lemma}[section]

\newtheorem{example}{\bf Example}

\theoremstyle{remark}

\theoremstyle{definition}
\newtheorem{definition}{\bf Definition}




\newcommand\vldbavailabilityurl{http://vldb.org/pvldb/format_vol14.html}
\newcommand\vldbpagestyle{empty} 
\newcommand{\revision}[1]{\color{black}{#1} \color{black}}

\begin{document}
	\begin{acronym}
	\acro{API}{Application Programming Interface}
\end{acronym}

\title{A Queueing-Theoretic Framework for Vehicle Dispatching in Dynamic Car-Hailing [technical report]}

\author{Peng Cheng}
\orcid{0000-0002-9797-6944}
\affiliation{%
  \institution{East China Normal University}
  \city{Shanghai}
  \country{China}
}
\email{pcheng@sei.ecnu.edu.cn}

\author{Jiabao Jin}
\affiliation{%
  \institution{East China Normal University}
  \city{Shanghai}
  \state{China}
}
\email{10175101146@stu.ecnu.edu.cn}

\author{Lei Chen}
\orcid{0000-0001-5109-3700}
\affiliation{%
  \institution{The Hong Kong University of Science and Technology}
  \city{Hong Kong}
  \country{China}
}
\email{leichen@cse.ust.hk}

\author{Xuemin Lin}
\affiliation{%
	\institution{The University of New South Wales}
	\city{Sydney}
	\country{Australia}
}
\email{lxue@cse.unsw.edu.au}

\author{Libin Zheng}
\affiliation{%
	\institution{Guangdong Key Laboratory of Big Data Analysis and Processing, Sun Yat-sen University}
	\city{Guangzhou}
	\country{China}
}
\email{zhenglb6@mail.sysu.edu.cn}


\begin{abstract}
With the rapid development of smart mobile devices, the car-hailing platforms (e.g., Uber or Lyft) have attracted much attention from the academia and the industry. In this paper, we  consider a dynamic car-hailing problem, namely \textit{maximum revenue vehicle dispatching} (MRVD), in which rider requests dynamically arrive and drivers need to serve  riders  such that the entire revenue of the platform is maximized. We prove that the MRVD problem is NP-hard and intractable. To handle the MRVD problem, we propose a queueing-based vehicle dispatching framework, which first uses existing machine learning models to predict the future vehicle demand of each region, then estimates the idle time periods of drivers through a double-sided queueing model for each region. With the information of the predicted vehicle demands and estimated idle time periods of drivers, we propose two batch-based vehicle dispatching algorithms to efficiently assign suitable drivers to riders such that the expected overall revenue of the platform is maximized during each batch processing. Through extensive experiments, we demonstrate the efficiency and effectiveness of our proposed approaches over both real and synthetic datasets. In summary, our methods can achieve $3\%\sim10\%$ increase on overall revenue without sacrificing on running speed compared with the state-of-the-art solutions.
\end{abstract}

\maketitle

\pagestyle{\vldbpagestyle}

\ifdefempty{\vldbavailabilityurl}{}{
\vspace{.3cm}
}

\section{Introduction}

Recently, with the popularity of the smart devices and high quality of the 
wireless networks, people can easily access network and communicate with 
online services. With the convenient car-hailing platforms (e.g., Uber \cite{uber} and DiDi Chuxing \cite{didi}), drivers can share their vehicles to riders to obtain monetary benefits and alleviate the pressure of public transportation. One of the crucial issues in the platforms is to efficiently dispatch vehicles to suitable riders. Although the platforms  become huge recently, during peak hours (e.g., 8 am) in some high demand areas (e.g., residential areas), riders need to wait for up to several hours before being served. To mitigate the shortage of vehicles in particular time and areas and improve the efficiency of the platforms, we investigate a queueing-theoretic framework in this paper. 


We illustrate the general idea of our framework in the following motivation example.

\begin{figure}[t!]\centering
	\scalebox{0.4}[0.4]{\includegraphics{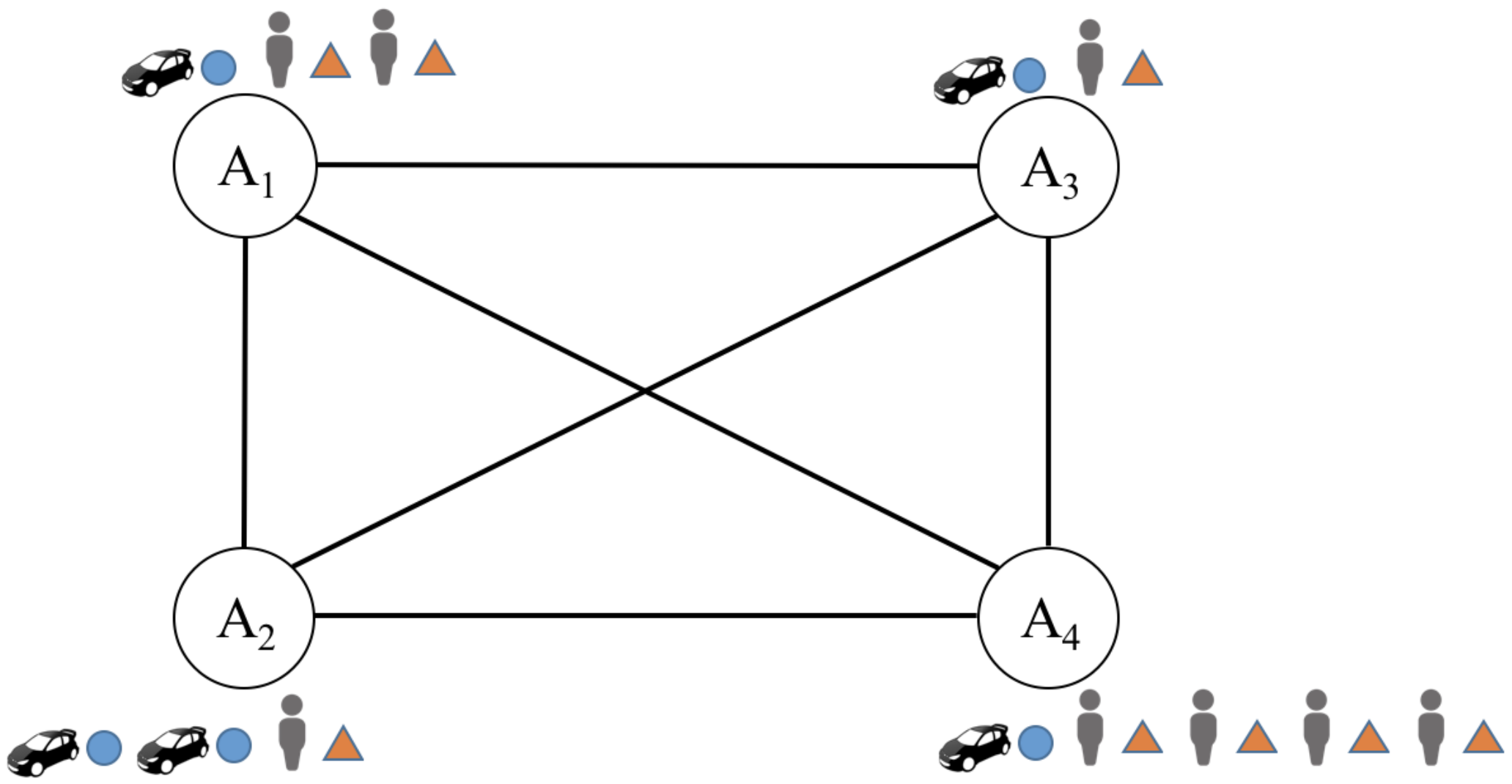}}
	\caption{\small A multi-area vehicle dispatching example.}
	\label{fig:motivation_example}\vspace{1ex}
\end{figure}

\begin{example}
\label{example1}
Consider a scenario of taxi dispatching in Figure \ref{fig:motivation_example}, where each one of the four  connected areas $A_1 \sim A_4$ maintains a queue of riders and taxis. The numbers of icons of taxis and riders near each area reflect the ratio between them in the corresponding area. For example, in area $A_1$, the number of available taxis is only half of the number of waiting riders. Riders only want to wait for a limited time (e.g., 5 minutes), otherwise they will switch to other public transportation systems (e.g., the bus system). Usually, taxis can easily pickup riders within the same area (e.g., moving several hundreds meters). However, if one taxi is assigned with a rider in a different area, the taxi may need to move several kilometers to pickup the rider. In addition, after serving its current rider, one taxi will usually wander around the destination area to pickup a new rider.

When the number of taxis is less than the demanding riders, the platform needs to smartly select riders to serve, such that the total revenue of the platform can be improved. In the example in Figure \ref{fig:motivation_example}, the platform should give higher priorities to the riders whose destination is within area $A_1$ or $A_4$, where taxis are scarce. On the contrast, the platform should give lower priorities to the riders whose destination is within area $A_2$, where taxis are abundant.
\end{example}

Motivated by the example above, in this paper, we propose a novel vehicle dispatching framework, which aims to  take riders' destinations into consideration to alleviate the shortage of taxis in particular areas such that the overall revenue of the platform can be maximized. 

Existing works in spatial matching or taxi dispatching only consider the pickup locations of riders and try to minimize the travel distance of taxis to pick riders \cite{tong2016online, seow2010collaborative}, which  causes some taxis need to wait for a long time period before picking up new convenient riders after finishing their last orders, which leads to the low efficiency of the platform (i.e., imbalanced demand-and-supply in some regions). In our previous poster paper \cite{cheng2019queueing}, to the best of our knowledge, we are the first to introduce the general idea of the queueing theoretic framework to balance the demand-and-supply of taxi dispatching during a relative long time period to maximize the overall revenue of the platform. However, in \cite{cheng2019queueing} we just explained the general queueing theoretic framework without detailed algorithms, analyses, and experimental studies, which will be introduced in this paper.


To improve the overall revenue of the platform during a relatively long time period, we propose a \textit{batch-based queueing-theoretic vehicle dispatching} framework in this paper. Specifically, we partition the whole space into regions and maintain a queue of waiting riders and drivers for each region. Once there are available drivers, the most-priority rider in the queue of waiting riders will be served. In addition, riders may \textit{quit/renege} from the platform if she is not served for a long period. Usually, a driver, after delivering her current rider, will continue to serve the next rider around the destination of the current rider. Thus, through serving the riders, the distribution of drivers will be changed. To serve as many riders as possible, intuitively, the platform should \textit{match} the distributions of riders and drivers (i.e., one region with more riders should have more drivers) through associating higher priorities to the riders whose destinations are in regions lacking of drivers, then drivers can serve riders quickly before they quit/renege. We first propose models to estimate the Poisson distributions of riders and drivers. Then, we utilize the queueing theory to analyze the idle time interval for each driver after finishing his/her assigned rider. Finally, we propose two vehicle/driver dispatching algorithms to maximize the overall revenue of the platform in each batch processing. Note that, we maximize the overall revenue of the platform through improving the efficiency of the entire platform to serve more riders without increasing the charges to riders or decreasing the payment to drivers. In fact, the payment to drivers is usually a portion of the overall revenue of the platform.  Thus, the more the overall revenue of the platform is, the more the payment to drivers is. In conclusion, our solution will benefit riders, drivers and the platform at the same time.


To summarize, we make the following contributions in the paper:
\begin{itemize}[leftmargin=*]
	\item We propose a batch-based queueing theoretic framework for vehicle dispatching in Section \ref{sec:framework}.
	\item  We estimate the idle interval time of drivers in Section \ref{sec:queue_analysis}.
	\item We propose two vehicle dispatching algorithms for each batch processing  in Section \ref{sec:task_assignment}.
	\item We have  conducted extensive experiments on real and synthetic data sets, to show the efficiency and effectiveness of our queueing-theoretic framework in Section \ref{sec:experimental}.
\end{itemize}

In addition, the remaining sections of the paper are arranged as
follows. We review and compare previous studies on queueing theory and vehicle dispatching in Section \ref{sec:related} and conclude the work in Section \ref{sec:conclusion}.

\section{Problem Definition}
\label{sec:problem_definition}

In this section, we present the formal definition of the vehicle dispatching problem, where a system will assign drivers to riders to deliver them to their destinations.

In this paper, we use a graph $G=\langle V, E\rangle$ to represent a road network, where $V$ is a set of vertices and $E$ is a set of edges. Each edge $(u, v)\in E$ ($u,v\in V$) is associated with a weight $cost(u,v)$ indicating the travel cost from vertex $u$ to vertex $v$. Here, the travel cost could be the travel time or the travel distance. When we know the travel speed of vehicles, we can convert one to another. In the rest of this paper, we will not differentiate between them and use \textit{travel cost} consistently.

To better manage the riders and drivers, we assume the entire space is divided into a set of $n$ regions/grids $A=\{a_1, a_2, ..., a_n\}$.

\subsection{Riders and Drivers}
\begin{definition}[Impatient Rider]
Let $r_i$ be an impatient rider, who submit his/her order $o_i$ to the platform at timestamp $t_i$, and is associated with a source location $s_i$, a destination location $e_i$ and a pickup deadline $\tau_i$.
\end{definition}

In particular, a rider $r_i$ comes to the platform to call for one and only one driver to deliver him/her from his/her current location $s_i$ to his/her destination location $e_i$. The request is sent to the platform at timestamp $t_i$. As the rider is impatient, if the platform cannot assign an available driver to pick up him/her within $\tau_i$ time after $t_i$, he/she will quit/renege from the platform and may switch to other platforms or use other transportation systems. Usually, rider $r_i$ will not rejoin the platform immediately after he/she is delivered to his/her destination. Thus, in this paper, we assume that each rider is unique and only lives  for the lifetime of his/her ride  in the platform. In addition, if a rider $r_i$ is delivered to his/her destination, the platform will charge him/her for $\alpha  \cdot cost(s_i, e_i)$, where $\alpha$ is the travel fee rate of the platform.

\begin{definition}[Driver]
Let $d_j$ be a driver, who is located \revision{at} position $l_j(t)$ at timestamp $t$. His/her status is either \textit{busy} (i.e., on delivering any riders) or \textit{available} (i.e., free to be assigned to a rider).
\end{definition}

When a new driver $d_j$ \revision{joins} the platform, he/she is considered to be available to serve riders. Once a rider $r_i$ is assigned to a driver $d_j$, the driver will move to the source location $s_i$ to pickup rider $r_i$, then send rider $r_i$ to his/her destination location $e_i$. During that period, driver $d_j$ is considered \textit{busy}. After driver $d_j$ finishes his/her current task,  $d_j$ will become available again.  
For region $a_k$ at timestamp $t$, we denote the set of available drivers as $D_k(t)$ and the number of them as $|D_k(t)|$.

\subsection{The Maximum Revenue Vehicle Dispatching Problem}

Before presenting the formal definition of the maximum revenue vehicle dispatching problem, we first define the valid rider-and-driver dispatching pair.

\begin{definition}[Valid Rider-and-Driver Dispatching Pair]
	Let $\langle r_i, d_j\rangle$ be a valid rider-and-driver dispatching pair, where driver $d_j$ can arrive at the pickup location $s_i$ of rider $r_i$ before the pickup deadline $\tau_i$ and  driver $d_j$ is in available status when he/she is picking up rider $r_i$.
\end{definition}

Now we give the formal definition of the maximum revenue vehicle dispatching problem as follows:
\begin{definition}[Maximum Revenue Vehicle Dispatching Problem, MRVD]
For a given time period $\mathbb{T}$, a set of impatient riders $R_{\mathbb{T}}$ and a set of drivers $D_{\mathbb{T}}$, the maximum revenue vehicle dispatching problem is to select a set, $I_{\mathbb{T}}$, of \revision{\textit{valid rider-and-driver dispatching pairs}} such that the overall revenue of the platform is maximized, which is:
\begin{equation}\label{eq:objective}
\max \sum_{\langle r_i, d_j \rangle \in I_{\mathbb{T}}} \alpha \cdot cost(s_i, e_i),
\end{equation}
\noindent where $\alpha$ is the travel fee rate of the platform.
\end{definition}

Intuitively, to maximize the overall revenue the platform  should serve as many long travel distance riders as possible as shown in Equation \ref{eq:objective}. However, the platform has no control on riders (i.e., the platform cannot schedule the arrivals and enlarge the waiting deadlines of riders), and can only affect the behaviors of drivers (i.e., dispatching drivers to pickup different riders). We will have a reduction in the end of this section to show practical rules to dispatching drivers to maximize the overall revenue of the platform.

\subsection{Hardness of MRVD}

\revision{
We prove MRVD is NP-hard through a reduction from a variant of  traveling salesman problem (TSP),  the deadline TSP \cite{bansal2004approximation}, which is a  known NP-hard problem. 

\begin{theorem}
	(Hardness of  MRVD) The problem of maximum revenue vehicle dispatching (MRVD) is NP-hard.
	\label{theorem:hardness}
\end{theorem}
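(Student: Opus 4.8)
The plan is to reduce (the decision version of) deadline TSP~\cite{bansal2004approximation} to MRVD. In deadline TSP we are given a metric over points $\{v_0, v_1, \dots, v_n\}$ with a designated origin $v_0$, a deadline $D(v_i)\ge 0$ for each point, and an integer $k$; the question is whether some walk starting at $v_0$ can visit at least $k$ of the points $v_1,\dots,v_n$, reaching each visited point $v_i$ at time at most $D(v_i)$ (the time elapsed equals the distance travelled). I will encode every point $v_i$ as an impatient rider and force the MRVD instance to contain a \emph{single} driver, so that this driver's route plays exactly the role of the deadline-TSP walk.

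Concretely, from a deadline-TSP instance I would build the following MRVD instance. Realise the given metric as the shortest-path metric of the road network $G$, and for each $i$ add a fresh node $e_i$ joined to $v_i$ by an edge of a fixed cost $c$. The time horizon $\mathbb{T}$ starts at time $0$ with $D_{\mathbb{T}}=\{d_1\}$, a single driver located at $v_0$. For each $i\in\{1,\dots,n\}$ create a rider $r_i$ with request time $t_i=0$, source $s_i=v_i$, destination $e_i$, and pickup deadline $\tau_i$ essentially equal to $D(v_i)$ (the precise value is fixed below). Because every ride has the \emph{same} cost $cost(s_i,e_i)=c$, the objective in Equation~\eqref{eq:objective} evaluates to $\alpha c\,|I_{\mathbb{T}}|$, so maximising revenue is the same as maximising the number of riders served; we then ask whether MRVD admits a dispatching of revenue at least $K:=\alpha c\,k$.

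For the correspondence, observe that with one driver and all requests released at time $0$, any valid dispatching $I_{\mathbb{T}}$ is realised by the driver serving the chosen riders in some order $r_{i_1}, r_{i_2},\dots$; by the triangle inequality the driver may as well travel directly $v_0\to s_{i_1}\to e_{i_1}\to s_{i_2}\to\cdots$, it becomes available precisely at each $e_{i_\ell}$, and the dispatching is valid iff it reaches every $s_{i_\ell}$ no later than $\tau_{i_\ell}$. Reading the served riders as the visited points, this is exactly a deadline-respecting walk from $v_0$; conversely, any such walk yields a valid dispatching of the same size. Hence a dispatching of revenue $\ge K$ exists iff the deadline-TSP instance is a yes-instance, and NP-hardness follows provided the reduction is polynomial.

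The delicate point --- and the step I expect to be the real work --- is timing granularity: each fixed-cost ride segment $s_i\to e_i$ (together with the return toward $v_i$) adds a small offset to the driver's clock that the deadline-TSP timeline does not contain, so literally setting $\tau_i=D(v_i)$ could cause a deadline to be met or missed for the wrong reason. I would remove this by first rounding the deadline-TSP instance to integer distances and deadlines (it is a purely combinatorial problem), then scaling every distance and deadline by a polynomial factor $M$ strictly larger than the largest possible accumulated offset (which is only $O(n)$) while keeping each ride segment of unit cost, and finally setting $\tau_i$ to the scaled $D(v_i)$ plus that maximum offset. With this choice the offset is too small to bridge two distinct scaled-integer deadline values, so the driver meets $\tau_i$ along its route iff the corresponding walk meets $D(v_i)$; since $M$ is polynomial in $n$, all numbers grow by only a polynomial factor and the reduction runs in polynomial time, completing the argument.
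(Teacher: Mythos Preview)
Your reduction is essentially the paper's: both reduce deadline TSP to a single-driver MRVD instance in which every TSP point becomes an impatient rider released at time~$0$ with the corresponding pickup deadline, so that the driver's service sequence plays the role of the salesman's walk. The paper works with the reward-weighted version of deadline TSP and simply places each rider's destination ``very close'' to its source so that the service time ``can be ignored,'' recovering the per-point reward through the fee rate~$\alpha$; you instead use the counting/decision version, give every ride the same fixed cost~$c$ so that revenue is proportional to the number of riders served, and then explicitly absorb the resulting $O(n)$ timing offset by scaling the integer metric by a polynomial factor~$M$. Your treatment of the offset is more careful than the paper's informal ``can be ignored,'' at the cost of a slightly longer argument; apart from these technical choices the two proofs coincide.
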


\begin{proof}
	We prove the theorem by a reduction from the deadline TSP \cite{bansal2004approximation}.
	A deadline TSP problem can be described as follows:
	given a set of nodes $V$, each node $v_i \in V$ is located at location $v_i^l$ with a deadline $v_i^d$. There is a salesman $s$ locating at position $s^l$ at the beginning, who wants to visit the nodes. If $s$ can visit node $v_i$ before its deadline $v_i^d$, $s$ will receive a reward $v_i^r$. The problem is to find a path for $s$ to visit nodes such that the total reward is maximized.

	For the deadline TSP instance, we can transform it to an instance of
	MRVD as follows: we give only one driver $d_j$ with unlimited lifetime located at position $s^l$ at the very beginning. In addition, for each node $v_i \in V$, we generate a rider $r_i$, who is located at $v_i^l$ with a pickup deadline of $v_i^d$.  All the riders post their orders at the beginning of time. We set the travel fee rate $\alpha$ to a large enough value such that the destination $e_i$ of each rider $r_i$ is very close to his/her origin location $v_i^l$ and the travel time of serving $r_i$ can be ignored. In addition, the travel cost $\alpha\cdot cost(v_i^l,e_i)$ is equal to the visiting reward of the corresponding node $v_i^r$. Then, for this MRVD instance, we want to arrange a schedule for the given driver such that his/her overall revenue is maximized.
	
	Thus, to maximize the overall revenue satisfying the pickup deadlines of riders is same to maximize the total reward in the deadline TSP problem.
	
	Given this	mapping, it shows that the deadline TSP instance can
	be solved if and only if the corresponding MRVD problem instance can be solved.
	This way, we reduce the deadline TSP to the MRVD problem. Since the deadline TSP is known to be NP-hard \cite{bansal2004approximation}, MRVD is also NP-hard, which completes our proof.
\end{proof}

\begin{table}
	\begin{center}
		\caption{\small Symbols and Descriptions.} \label{tab:symbols}\vspace{-2ex}
			\begin{tabular}{l|l}
				{\bf Symbol} & {\bf \qquad \qquad \qquad\qquad\qquad Description} \\ \hline \hline
				$a_k$   & a region/grid\\
				$r_i$   & an impatient rider\\
				$o_i$   & the order of the impatient rider $r_i$\\
				$t_i$  & the timestamp when $r_i$ posts her ride request\\
				$s_i$   & the source location of rider $r_i$\\
				$e_i$   & the destination location of rider $r_i$\\
				$\tau_i$   & the pickup deadline of rider $r_i$\\
				$d_j$   & a driver\\
				$l_j(t)$   & the position of driver $d_j$ at timestamp $t$\\
				$D_k(t)$   & a set of available drivers in region $a_k$ at time $t$\\
				\hline
			\end{tabular}
	\end{center}
\end{table}

In real platform, orders are created at different timestamps, which means our MRVD problem is an online problem. To evaluate the effectiveness of algorithms on online problems, competitive ratio is a common used metric, which is the ratio of the result achieved by an online algorithm to the optimal result achieved in the corresponding offline problem. However, in the existing study about online deadline TSP problem \cite{wen2012deadline}, the authors prove that there is no algorithm can achieve a constant competitive ratio for online deadline TSP problem even the arriving timestamps of orders are known in advance. Moreover, in MRVD, usually multiple drivers need to be arranged, which means that the MRVD problem  is more complex than  the online deadline TSP problem. Thus, we turn to use the experimental results to show the effectiveness of our approaches.

}

\subsection{Reductions of MRVD}
\label{sec:reduction}
Let $T_j$ be the lifetime of driver $d_j$ from the time he/she joins  to the time he/she exits the platform. During $T_j$, the status of driver $d_j$ keeps switching between \textit{available} and \textit{busy}. We notice that only when driver $d_j$ is in \textit{busy} status, he/she contributes to the overall revenue of the platform. Then, we can rewrite the objective function of MRVD as below:
\vspace{-1ex}
\begin{align}
	&\max \sum_{\langle r_i, d_j \rangle \in I_{\mathbb{T}}} \alpha \cdot cost(s_i, e_i)\notag\\
	\Rightarrow & \max \sum_{d_j \in D_{\mathbb{T}}} \sum_{r_i \in R_j} \alpha \cdot cost(s_i, e_i)\notag\\
	\Rightarrow & \max \alpha \sum_{d_j \in D_{\mathbb{T}}} \sum_{r_i \in R_j}  cost(s_i, e_i) \label{eq:long_busy}
\end{align}
\noindent where $D_{\mathbb{T}}$ is the set of drivers on the platform during the given time period $\mathbb{T}$, and \revision{$R_j$ is the set of riders that are served by driver $d_j$ in the selected set, $I_{\mathbb{T}}$, of \textit{valid rider-and-driver dispatching pairs}.} According to Equation \ref{eq:long_busy}, the platform should maximize the length of the total busy time of each driver to maximize its overall revenue. Since the lifetime $T_j$ of each driver $d_j$ is fixed, to maximize his/her total \textit{busy} time, $\sum_{r_i \in R_j}cost(s_i, e_i)$, is equivalent to minimize his/her total idle time, $T_j - \sum_{r_i \in R_j}cost(s_i, e_i)$. Then, the objective of MRVD can be rewritten as follows:
\begin{align}
&\max \alpha \sum_{d_j \in D_{\mathbb{T}}} \sum_{r_i \in R_j}  cost(s_i, e_i)\notag\\
\Rightarrow &\min \sum_{d_j \in D_{\mathbb{T}}} \big(T_j - \sum_{r_i \in R_j}  cost(s_i, e_i)\big)\notag\\
\Rightarrow & \min \sum_{d_j \in D_{\mathbb{T}}}  \sum_{i =0}^{|R_j|}  \psi_{ij}\label{eq:idle}
\end{align}
\noindent where $\psi_{ij}$ is the idle time of driver $d_j$ after delivering rider $r_i$. Here, $\psi_{0j}$ indicates the idle time of driver $d_j$ before picking up his/her first rider.

According to Equation (\ref{eq:idle}), to maximize the overall revenue, the platform intuitively should reduce the number of served riders (e.g., $|R_j|$) and the idle time interval (e.g., $\psi_{ij}$) between any two consecutive riders for each driver. It may be confused that reducing the number of served riders for each driver seems to contradict the goal of maximizing the overall revenue. To explain this contradiction, we denote the time period of serving a rider and the idle time before serving the next rider as a \textit{service round} for a driver. In fact, the lifetime of a driver $d_j$ is fixed as $T_j$, when driver $d_j$ serves fewer riders, the average length of  service rounds will be longer (i.e., $\frac{T_j}{|R_j|}$). Then, for a service round that driver $d_j$ is assigned to serve rider $r_i$, minimizing the idle time interval $\psi_{ij}$ will lead to that the travel cost $cost(s_i,e_i)$ increases, which agrees with the intuition from Equation (\ref{eq:long_busy}). In conclusion, we can have two practical and controllable rules for the platform in its online processes to maximize the overall revenue during a given time period $\mathbb{T}$: \textbf{a) \textit{associating higher priorities to the riders whose travel costs are high}; b) \textit{reducing the length of the idle time between serving any two consecutive riders for each driver.}}

In the rest of this paper, we propose a queueing-theoretic framework, taking into consideration of the travel cost and the idle time length after each ride request, to maximize the overall revenue of the platform during a given time period.
\section{Overview of Queueing-Based Vehicle Dispatching Framework}
\label{sec:framework}

\begin{figure}[t!]\centering
	\scalebox{0.36}[0.36]{\includegraphics{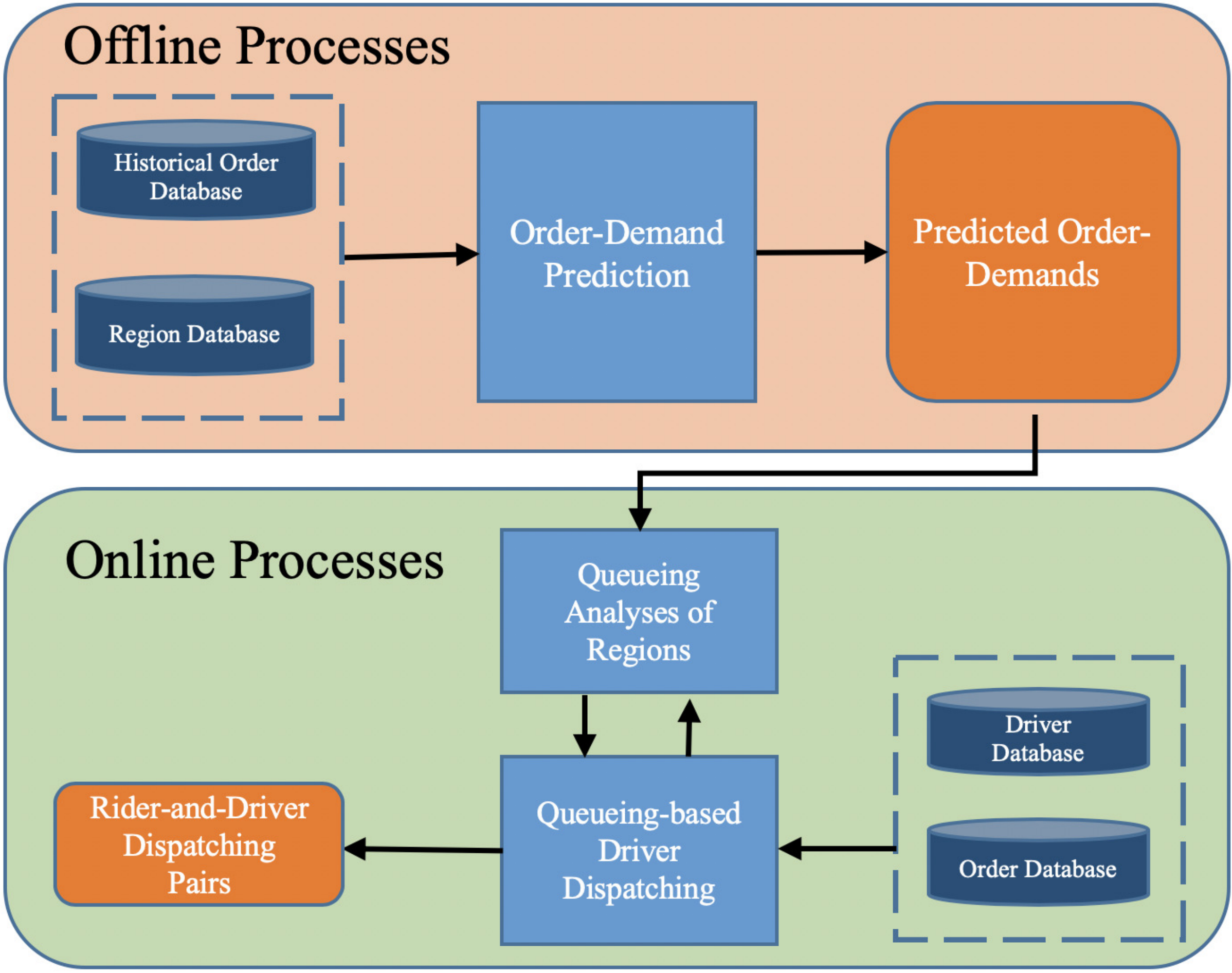}}
	\caption{\small Illustration of the Framework Work Flow.}
	\label{fig:framework}
\end{figure}

In this section, we introduce an overview of our queueing-based vehicle dispatching framework. In general, the framework includes three parts:  vehicle demand-supply prediction, queueing analysis of each region and queueing-based vehicle dispatching algorithms. From the other perspective, our framework contains offline prediction processes and online analysing and dispatching processes as shown in Figure \ref{fig:framework}. 
In the offline processes, the platform predicts the number of orders  for each region in each time periods based on the historical order records. In the online processes, the platform utilizes the predicted order demands and supplies to assign available drivers to orders with a goal to maximizing the overall revenue of the platform subjected to the deadline constraint of orders. In addition, the driver dispatching process and the queueing analyses process can affect each other, thus we interactively assign orders to drivers and update the results of queueing analyses of regions.

\revision{We first briefly introduce the major parts of our queueing-based vehicle dispatching framework, then propose a batch-based vehicle dispatching algorithm to handle the orders from riders.}

\subsection{Major Parts of Queueing-Based Vehicle Dispatching Framework}

\revision{Our queueing-based vehicle dispatching framework includes three major parts: offline vehicle demand-supply prediction, region queueing analysis and queueing-based vehicle dispatching algorithm.}
 
\subsubsection{Offline Vehicle Demand-Supply Prediction}
\label{sec:demand_prediction}
In our framework, we predict the order demand of each region for given time periods. For the rejoined drivers, we can estimate their availability based on their assignments and travel costs. In practice, it is hard to predict the accurate location and timestamp of a particular rider since the uncertain behaviors of a single user. To utilize the distribution of riders, we predict the number of riders for a given region (i.e., a spatial range of area, such as square regions or hexagon regions) in a given time period (i.e., next 30 minutes).  Existing work can be applied offline to predict the demand of riders  in given regions and time periods, such as demand-supply prediction of traffic \cite{li2015traffic, chu2018passenger}, and spatial-temporal data prediction \cite{zhang2016dnn, cressie2015statistics}. In this paper, we test the representative prediction algorithms, e.g., Historical Average (HA) method,  Linear Regression (LR) method, Gradient Based Regression Tree (GBRT) method \cite{friedman2002stochastic} and DeepST  \cite{zhang2017deep} on the real-world taxi demand-supply dataset and select the most effective one, DeepST \cite{zhang2017deep}, for our offline demand-supply prediction process,  which can achieve very accurate order demand prediction results (i.e., 2.3 \% RMSE) on our testing data set. Specifically, DeepST uses Convolutional Neural Network (CNN) \cite{krizhevsky2012imagenet} on historical data of order counts and meta data (e.g., time of day, day of week and city weather) to predict the order demand for each region in each time slot (e.g., a period of 30 minutes). Due to space limitation, we put the detailed comparison of the spatial temporal prediction models  in Appendix A of our technical report~\cite{report}.

\subsubsection{Region Queueing Analysis}
The available drivers in a region $a_x$ in a time period $\mathbb{T}$ \revision{come} from the rejoined active drivers and unassigned drivers in the  previous batch.
With the predicted numbers of orders for the region $a_x$ in a given time period $\mathbb{T}$ and the schedules of active drivers, we can know the demand and supply of drivers for the region $a_x$ in  time period $\mathbb{T}$. 
We estimate the waiting times  (idle time intervals) for vehicles from finishing last order to receive next order in Section \ref{sec:queue_analysis}. According to the analyses in Section \ref{sec:problem_definition}, shorter idle time intervals are better. Thus, the estimated waiting times of vehicles can be used to guide the order dispatching process to achieve a high overall revenue.

\subsubsection{Queueing-Based Vehicle Dispatching}
The platform needs to dispatch drivers to serve most ``valuable'' riders with high priorities. According \revision{to} the  analyses in Section \ref{sec:problem_definition}, orders having high travel costs and ending in ``hot regions'' (i.e., regions with many future orders) can contribute more to the platform, which should be associated with high priorities. In addition, since drivers usually prefer to serve riders close to their locations after they finish the last orders, the platform's selection on serving orders will affect the vehicle supply in the future, which in turn will affect  the queueing analyses of the related regions. In Example \ref{example1}, if the platform dispatch a driver to serve a rider having a destination in region $A_1$, the driver supply in region $A_1$ will increase slightly after finishing the order. We propose efficient and effective algorithms in Section \ref{sec:task_assignment} to dispatch available drivers to riders with an optimization goal of maximizing the overall revenue of the platform subjected to the deadline constraint of orders.

 \subsection{The Batch-based Vehicle Dispatching Algorithm}
 \label{sec:batch_framework}
 \begin{algorithm}[t]
 	\DontPrintSemicolon
 	\KwIn{\small The overall time period $\mathbb{T}$}
 	\KwOut{\small A set of rider-and-driver dispatching pairs within the time period $\mathbb{T}$}
 	
 	\While{current time $\bar{t}$ is in $\mathbb{T}$}{
 		\ForEach{$a_k \in A$}{
 			retrieve  the waiting riders in region $a_k$ to $R_k$\;
 			retrieve the available drivers in region $a_k$ to $D_k$\;
 			predict the number of upcoming riders in region $a_k$ during $[\bar{t}, \bar{t}+t_c]$ as $|\hat{R}_k|$\;
 			count the number of upcoming rejoined drivers in region $a_k$ during $[\bar{t}, \bar{t}+t_c]$ as $|\hat{D}_k|$\;
 		}

 		use  \textit{task-priority greedy} or \textit{local search} approach to obtain a  set of rider-and-driver pairs $I_{\bar{t}}$
 		
 		\ForEach{$\langle r_i, d_j \rangle \in I_{\bar{t}}$}{
 			inform driver $d_j$ to pick rider $r_i$\;
 		}
 		wait till $\bar{t}+\Delta$\;
 	}
 	
 	\caption{Batch-based Vehicle Dispatching Algorithm}
 	\label{alg:batch_framework}
 \end{algorithm}
 
 To handle the online processes of vehicle dispatching, we propose a batch-based processing framework to iteratively assign drivers to riders every $\Delta$ seconds. Note that, in real applications (e.g., DiDi Chuxing \cite{didi}), $\Delta$ is set very small (e.g., several seconds) such that the customers cannot notice the delay of the batch processing. To solve the assignment problem in each batch, we propose two heuristic algorithms to greedily maximize the revenue summation of the platform for the current scheduling time period $[\bar{t}, \bar{t}+t_c]$, where $\bar{t}$ indicates the current timestamp and $t_c$ is the length of the current scheduling time period. 

As shown in Algorithm \ref{alg:batch_framework}, we iteratively assign drivers to riders for multiple batches with a time interval $\Delta$  between every two successive batches. Specifically, for a batch starting at timestamp $\bar{t}$, we first retrieve
 a set, $R_k$, of waiting riders and a set, $D_k$, of available drivers for each region $a_k$ (lines 3-4). Here, waiting riders $R_k$ include the riders that are not assigned with any drivers during the last batch and the newly coming riders after the last batch in region $a_k$. Moreover, available drivers $D_k$ includes the drivers that are not assigned with any riders in the last batch, and the drivers that have finished the previous assigned tasks then rejoin the platform in region $a_k$. 
 To estimate the arrival rates of riders and serving rates drivers for the current scheduling time period $[\bar{t}, \bar{t}+t_c]$, we predict the number, $|\hat{R}_k|$, of upcoming riders and estimate the number, $|\hat{D}_k|$, of rejoin drivers in region $a_k$ (lines 5-6). Then, we use our proposed heuristic vehicle dispatching algorithms to achieve a set $I_{\bar{t}}$ of rider-and-driver dispatching pairs to greedily maximize the revenue summation of the platform for the current scheduling time period $[\bar{t}, \bar{t}+t_c]$ (line 7).  For every rider-and-driver dispatching pair $\langle r_i, d_j \rangle$ in $I_{\bar{t}}$, we inform the driver $d_j$ to pick up rider $r_i$ (lines 8-9). Finally, we wait until the time comes to the next batch $\bar{t}+\Delta$ (line 10).
 
 In the following sections, we will first introduce the queueing analyses of regions in Section \ref{sec:queue_analysis}, then propose our queueing based vehicle dispatching algorithms in Section \ref{sec:task_assignment}.
\section{Queueing Analyses of Regions}
\label{sec:queue_analysis}

In this section, we analyze the waiting riders for each single region through a queueing model. In queueing theory, customers join queue in an arrival (or ``birth'') rate $\lambda$, then the platform will serve the customers in a service (``death'') rate $\mu$. We first introduce the queue configuration, then estimate the idle time for a driver after he/she finishes his/her current order. 

\subsection{Queue Configuration of a Single Region}

In this paper, the platform can be considered as a server to match available drivers and waiting riders in each region. The riders come to the platform and wait for drivers to pick them. However, the riders are impatient and will leave the platform if they are not served before their deadlines. The available drivers come from the rejoined active drivers, who are the ones continuing to work on the platform after finishing their assigned orders. 

Similar to the previous assumption in the related work \cite{banerjee2016dynamic}, we assume the arrival rate of riders (in number per minute) follows the  Poisson distribution with rates $\lambda$ in a region $a$ during a short time period with length $t_c$ (e.g., a half hour). In addition, we also model the arrivals of rejoined active drivers follow a Poisson distribution with a  rate of $\mu$ in a region $a$ during a short time period with length $t_c$.  Note that, although the arrival rate of riders and drivers may change during different time periods in a day (e.g., 8 to 9 A.M. and 8 to 9 P.M.), to facilitate the analysis of the queueing situation in a short time period (e.g., a half hour), we model the arrival rates of riders and drivers as stable rates. We verify our assumption that the arrivals of orders and rejoined drivers follow Poisson distributions through chi-square ($\chi^2$) tests \cite{greenwood1996guide}. Due to the space limitation, please refer to Appendix B of our technical report \cite{report} for  details.

\revision{When the riders are more than the drivers in a region, the platform will select a subset of riders with ``higher priorities'' to serve first. Usually a driver will rejoin the platform in the same region of the destination of her/his last served rider. It leads to that the drivers appear in the regions where the destinations of the selected high-priority riders are, and then the arrival rates of drivers in the corresponding ``selected'' regions will increase.}
In our queueing model, the priority of a rider is determined in line with his/her travel cost and the demand-supply situation in his/her destination region. According to the analysis in Section \ref{sec:reduction}, to improve the overall revenue, the platform prefers to give higher priorities to the riders who have higher travel costs and are going to ``hot'' regions.

Figure \ref{fig:birth_death} illustrates the birth-death chain of the queueing model for a region, where each circle indicates a state and the numbers in the circles represent the numbers of waiting riders in the corresponding states. For example, the state of $\textbf{2}$ indicates that there are 2 waiting riders in the region. Each link represents the transfer event from the tail state to the head state along its direction, where the value close to the link indicates the transfer rate. For example, the link with value $\lambda_0$ pointing from state \textbf{0} to state \textbf{1} indicates the transfer rate  from  state \textbf{0} to state \textbf{1} is $\lambda_0$. Since drivers may also congest in a region, if the arrival rate of drivers is higher than that of riders (i.e., $\mu > \lambda$), to uniformly represent the queueing situation of a region, we utilize the state of $\textbf{-n}$ ($-n<0$) to indicate that there are $n$ congested drivers  in the region.

\begin{figure}[t!]\centering\vspace{-2ex}
	\scalebox{0.4}[0.4]{\includegraphics{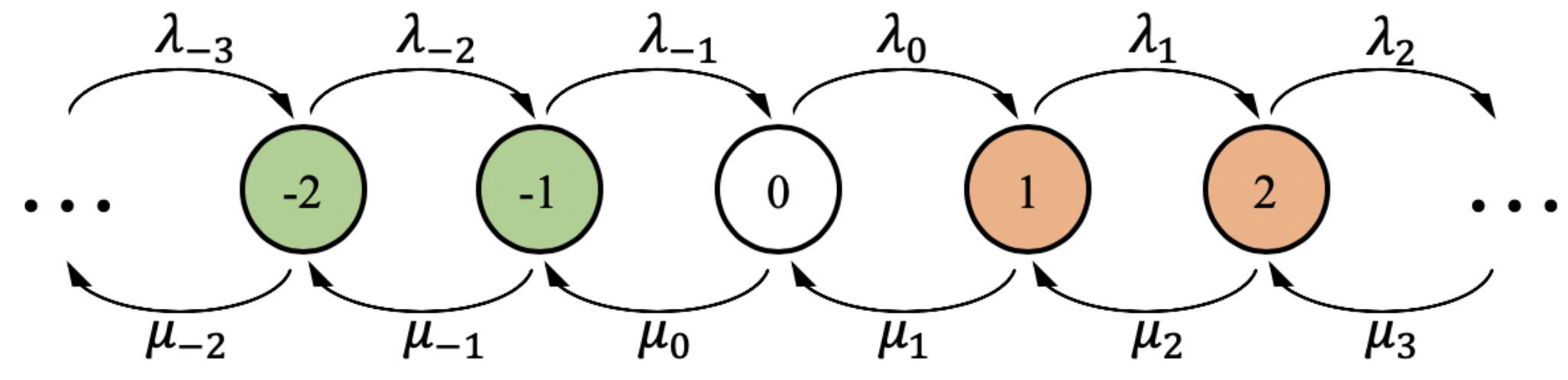}}\vspace{-2ex}
	\caption{\small Birth-death chain for the queue of a region.}
	\label{fig:birth_death}
\end{figure}


Another issue to change the number of waiting riders is that the impatient riders may quit from the platform if they are not served after a time period, which is called \textit{reneging} in queueing theory. As defined in the existing work \cite{shortle2018fundamentals}, we can define a state related reneging function $\pi(n)$ as the reneging rate of riders for the state $\textbf{n}$ ($n>0$) of the birth-death chain in Figure \ref{fig:birth_death}. As suggested in \cite{shortle2018fundamentals}, a good practice for the reneging function $\pi(n)$ is to define it as $e^{\beta n/\mu}$, where $\beta$ is a parameter determined based on the historical reneging records in the corresponding region.
Then, the death/service rate $\mu_n$ of the state $\textbf{n}$ can be adjusted as follows:

\begin{equation}
\mu_n=\left\{
\begin{array}{ll}
\mu, & n \leq 0 \\
\mu + \pi(n), & n > 0
\end{array}
\right. \label{eq:service_rate}
\end{equation}

For the birth/arrival rate $\lambda_n$ of state $\textbf{n}$, we define it as $\lambda_n=\lambda$, since drivers do not renege in our queueing model.

\subsection{Expected Idle Time Interval of Drivers}

\revision{In this section, we analyze the expected idle time interval of a driver $d_j$. Let region $a$ be the destination region of the last rider $r_i$ of $d_j$ and $d_j$ will join the queue of region $a$ after serving $r_i$. Thus, the state of the region $a$ (i.e., the length of waiting drivers or riders) will directly affect the waiting time (idle time) of $d_j$ before serving the next rider. For example, if the region $a$ is in a state of $\textbf{n}_a$ and $n_a>0$, which means there are $n_a$ riders are waiting for drivers, driver $d_j$ can be immediately assigned with a new rider after finishing the last order. On the contrary, if the region $a$ is in a state of $\textbf{n}_b$ and $n_b < 0$, which means there are $n_b$ available drivers are waiting for riders, driver $d_j$ will not be assigned with any new riders before $n_b$ available drivers are assigned with riders first. For  region $a$, we assume the arrives of the riders and rejoined drivers follow Poisson distributions with rates $\lambda$ and $\mu$, respectively. Then, the region $a$ can be in any state $\textbf{n}$ with the corresponding probability $p_n$.}

\begin{figure}[h!]\centering
	\scalebox{0.43}[0.43]{\includegraphics{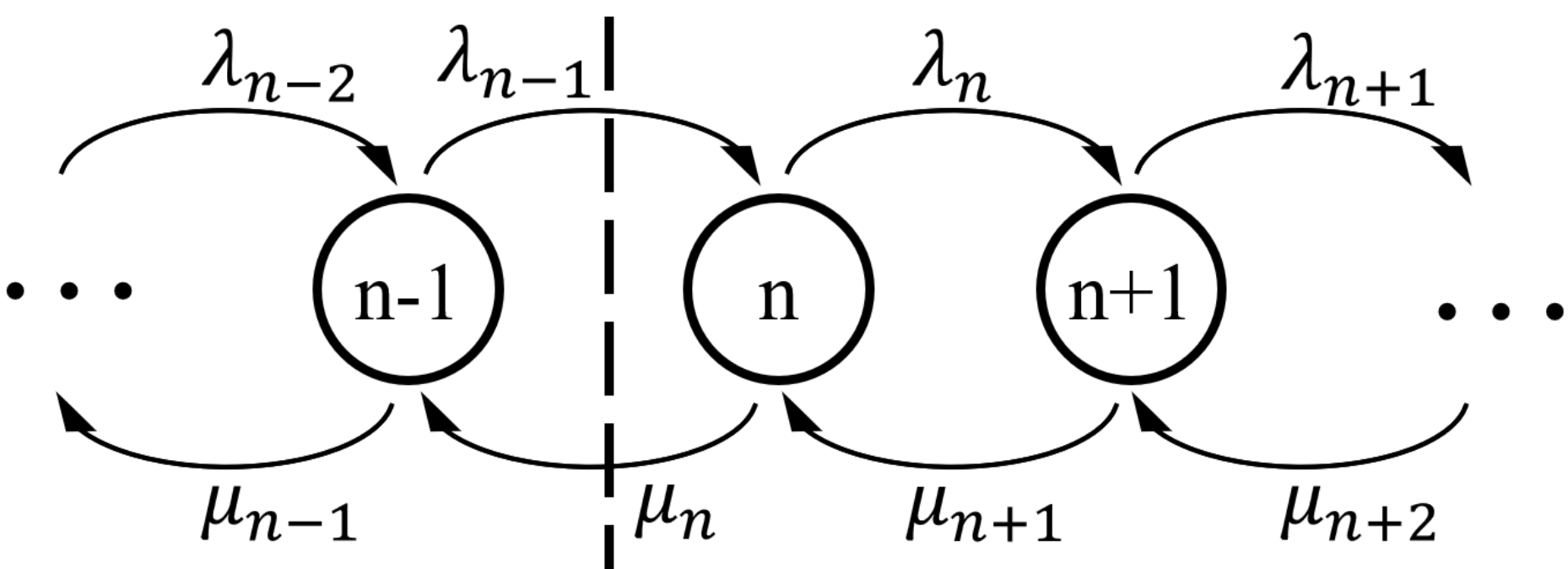}}
	\caption{\small Flow balance between states.}
	\label{fig:flow_balance}
\end{figure}

We here briefly introduce the flow balancing rule in analyzing the recursive relation between $p_n$ and $p_{n-1}$  \cite{shortle2018fundamentals}. As shown in Figure \ref{fig:flow_balance}, the mean flows across the dash line must be equal in a steady situation for the queue. In other words, for a relatively long period (e.g., 30 minutes), the rate of transitions $(\lambda_{n-1}p_{n-1})$ from state $\textbf{n-1}$ to state $\textbf{n}$ must equal the rate of transitions $(\mu_n p_n)$ from state $\textbf{n}$ to state $\textbf{n-1}$, which is as follows:
\begin{equation}\label{eq:flow_banlance}
\mu_n p_n = \lambda_{n-1}p_{n-1}
\end{equation}

By iteratively applying Equation (\ref{eq:flow_banlance}), we can derive:

\begin{equation}
p_n=\left\{
\begin{array}{ll}
p_0 \cdot(\frac{\mu}{\lambda})^{-n} , & n < 0 \\
p_0 \cdot \prod_{i=1}^{n}\frac{\lambda}{\mu + \pi(i)}, & n > 0
\end{array}
\right. \label{eq:p_n}
\end{equation}

\revision{Different from the traditional queueing model with only one-sided queue, our queueing model of a single region is  a double-sided queue. We need to analyze the particular idle times of drivers with our new queueing model.} We  estimate the expected idle time in different conditions: 1) more riders arrive; 2) more drivers rejoin; 3) balanced riders and drivers, since they have different properties.

\subsubsection{More Riders Arrive ($\lambda > \mu$)}
When $\lambda > \mu$, the summation of probabilities of states $\{\textbf{n}\}$, $n < 0$, can be calculated through a summation of geometric sequence, which is:
\begin{equation}\label{eq:left_sum}
\sum_{i=-1}^{-\infty} p_i =  p_0 \sum_{i=1}^{\infty} \big(\frac{\mu}{\lambda}\big)^{i}= \frac{\mu}{\lambda - \mu} p_0
\end{equation}
 
In addition, we have the fact that the summation of the probabilities $\{p_n\}, n\in[-\infty, \infty]$, must be 1.
\begin{equation}\label{eq:state_summation1}
\sum_{i=-1}^{-\infty} p_i  + p_0 + \sum_{i=1}^{\infty} p_i  = 1
\end{equation}

Putting Equations  (\ref{eq:p_n}) and (\ref{eq:left_sum})  into Equation \ref{eq:state_summation1}, we have:
\begin{equation}
p_0 \Big(\frac{\mu}{\lambda - \mu} + 1 + \sum_{n=1}^{\infty} \prod_{i=1}^{n} \frac{\lambda}{\mu + \pi(i)}\Big)=1\notag
\end{equation}

Then, we have 
\begin{equation}\label{eq:p_0}
p_0 = \Big(\frac{\lambda}{\lambda - \mu} + \sum_{n=1}^{\infty} \prod_{i=1}^{n} \frac{\lambda}{\mu + \pi(i)}\Big)^{-1}
\end{equation}

Drivers are dispatched in a first come, first served order. Let $T(n)$ be the expected idle time interval of an arrival driver $d_j$ when the queue is in state $\textbf{n}$. When there are waiting riders ($n>0$), the idle time interval of $d_j$ is just the processing time of the platform to arrange a new rider, which can be ignored. When there is no waiting riders ($n\leq 0$), the driver needs to wait until the next ($|n|  +1$)th rider appears in the region, which  needs $\frac{|n| + 1}{\lambda}$ time on average. Finally, we can estimate the expected idle time, $ET(\lambda, \mu)$, of a driver after join a queue of region having the arrival rate $\lambda$ of riders and the arrival rate $\mu$ of drivers as follows:

\begin{eqnarray}
ET(\lambda, \mu) &=&  \sum_{i=0}^{-\infty} \frac{|n| + 1}{\lambda}\cdot p_{i} + \sum_{i=1}^{\infty} 0 \cdot p_i \notag\\
&=& \frac{p_0}{\lambda} \sum_{i=0}^{\infty} (i + 1) \big(\frac{\mu}{\lambda}\big)^i \notag\\
&=& \frac{\lambda p_0}{(\lambda - \mu)^2} \label{eq:et}
\end{eqnarray}

\subsubsection{More Drivers Rejoin ($\lambda < \mu$)}

We notice that when $\lambda < \mu$, the queue will congest more and more drivers when time elapses, which will harm the efficiency of the platform much. The platform will avoid that the rate of drivers $\mu$ become larger than the rate of riders $\lambda$ for each region. However, when there are indeed more drivers rejoining, we still can estimate the expected idle time. 

Let $K$ be the number of available drivers during the current scheduling time period with length $t_c$. Then, the queue of the region can at most congest with $K$  drivers. Let $\theta = \frac{\mu}{\lambda}$. We can calculate the summation of probabilities of states  $\{\textbf{n}\}$, $-K \leq n < 0$, with the equation as follows:

\begin{equation}\label{eq:limited_left_sum}
\sum_{i=-1}^{-K} p_i =  p_0 \sum_{i=1}^{K} \big(\frac{\mu}{\lambda}\big)^{i}= \frac{\theta^{K+1} - \theta}{\theta - 1} p_0
\end{equation}

Then, we can update Equation \ref{eq:p_0} when $\lambda < \mu$ as follows:

\begin{equation}\label{eq:p_0_driver}
p_0 = \Big(\frac{\theta^{K+1} - 1}{\theta - 1} + \sum_{n=1}^{\infty} \prod_{i=1}^{n} \frac{\lambda}{\mu + \pi(i)}\Big)^{-1}
\end{equation}

In addition, when the expected number of rejoined drivers during the current scheduling time period with length $t_c$ is $K$ and $\lambda < \mu$, we can estimate the expected idle time $ET(\lambda, \mu)$ as follows:
\begin{equation}
ET(\lambda, \mu) = \frac{p_0}{\lambda}  \frac{ (K+1)\theta^{K+2} - (K+2)\theta^{K+1}  + 1  }{(\theta -1)^2}. \label{eq:et_drive}
\end{equation}


\subsubsection{Balanced Riders and Drivers ($\lambda = \mu$)} When $\lambda = \mu$, we can update Equation \ref{eq:limited_left_sum} as follows:

\begin{equation}\label{eq:limited_left_equal_sum}
\sum_{i=-1}^{-K} p_i =  p_0 \sum_{i=1}^{K} \big(\frac{\mu}{\lambda}\big)^{i}=K p_0
\end{equation}

Then, we have

\begin{equation}\label{eq:p_0_equal}
p_0 = \Big(K+1 + \sum_{n=1}^{\infty} \prod_{i=1}^{n} \frac{\lambda}{\mu + \pi(i)}\Big)^{-1}
\end{equation}

Next, we can estimate the expected idle time $ET(\lambda, \mu)$ when $\lambda=\mu$ as follows:

\begin{equation}
ET(\lambda, \mu) = p_0 \frac{(K+1)(K+2)}{2\lambda}\label{eq:et_equal}
\end{equation}
\section{Queueing-based Vehicle Dispatching Algorithms}
\label{sec:task_assignment}

\subsection{The Idle Ratio Oriented Greedy Approach}

\begin{algorithm}[t]
	\DontPrintSemicolon
	\KwIn{\small A set of Regions $A$, current timestamp $\bar{t}$}
	\KwOut{\small A set of rider-and-driver dispatching pairs $I_{\bar{t}}$}
	$I_{\bar{t}} \gets \{\emptyset\}$\;
	$I_{v} \gets \{\emptyset\}$\;
	\ForEach{$a_k \in A$}{
		retrieve a set $I_k$ of valid rider-and-driver dispatching pairs from $R_k$ and $D_k$\;
		$I_{v} \gets I_{v} \cup I_k$\;
		estimate the arrival rate $\lambda_{(k)}$ of riders and arrival rate $\mu_{(k)}$ of rejoined drivers in region $a_k$ during $[\bar{t}, \bar{t}+t_c]$\;
	}
	sort dispatching pairs in $I_v$ based on their idle ratio\;
	\While{$I_v$ is not empty}{
		select the rider-and-driver pair $\langle r_i, d_j\rangle$ having the smallest idle ratio from $I_v$\;
		add $\langle r_i, d_j\rangle$ to $I_{\bar{t}}$\;
		update $\mu_{(k)}$ of the destination region $a_k$ of $r_i$\;
		remove $\langle r_i, .\rangle$ and $\langle., d_j\rangle$ from $I_v$\;
	}
	
	\Return $I_{\bar{t}}$\;
	\caption{Idle Ratio Oriented Greedy Algorithm}
	\label{alg:idle_ratio_greedy}
\end{algorithm}

We first propose an \textit{idle ratio oriented greedy} approach to solve each batch process in line 7 of Algorithm \ref{alg:batch_framework} with a goal to maximize the revenue summation of the platform during the current scheduling time period $[\bar{t}, \bar{t}+t_c]$, where $\bar{t}$ is the current timestamp and $t_c$ is the length of the current scheduling time window. We first define the idle ratio of driver $d_j$ to server rider $r_i$, whose destination $e_i$ is in region $a_k$, as follows:

\begin{equation}\label{eq:idle_ratio}
IR(r_i, d_j) = \frac{ET(\lambda_{(k)}, \mu_{(k)})}{cost(s_i, e_i) + ET(\lambda_{(k)}, \mu_{(k)})},
\end{equation}

\noindent where $ET(\lambda_{(k)}, \mu_{(k)})$ is the expected idle time of driver $d_j$ when he/she rejoins the platform at region $a_k$, and $cost(s_i, e_i)$ is the travel cost (travel time) on serving rider $r_i$. Recall that, in Section \ref{sec:reduction}, we have two guiding rules for the platform to maximize its overall revenue after analyzing the MRVD problem: \textbf{a) \textit{associating higher priorities to the riders whose travel costs are higher}; b) \textit{reducing the length of the idle time between serving any two consecutive riders for each driver.}} We notice that when the travel cost $cost(s_i, e_i)$ increases, $IR(r_i, d_j)$ will decrease; when the expected idle time $ET(\lambda_{(k)}, \mu_{(k)})$ decreases, $IR(r_i, d_j)$ will also decrease. As a result, we only need to greedily select the rider-and-driver dispatching pairs with low idle ratios (as defined in Equation \ref{eq:idle_ratio}), then we can follow the above mentioned two guiding rules to maximize the overall revenue of the platform. Based  on the observation, we propose an idle ratio oriented greedy approach as shown in Algorithm \ref{alg:idle_ratio_greedy}, which greedily \revision{selects} the rider-and-driver dispatching pair having the smallest idle ratio value in each iteration.

Specifically, we first initialize the selected rider-and-driver pairs $I_{\bar{t}}$ and the valid rider-and-driver pairs $I_v$ with empty sets (lines 1-2). Then, for each region $a_k$, we put the valid rider-and-driver pairs $I_k$ between the waiting riders $R_k$ and available drivers $D_k$ in the region into $I_v$ (lines 4-5) and estimate the arrival rates, $\lambda_{(k)}$ and $\mu_{(k)}$, of new riders and rejoined drivers during the current scheduling period $[\bar{t}, \bar{t}+t_c]$ as follows:

\begin{equation}
\lambda_{(k)}=\left\{
\begin{array}{ll}
\frac{|\hat{R}_k|}{t_c}, & |R_k| \leq |D_k|\\
\frac{|\hat{R}_k| + |R_k| - |D_k| }{t_c}, & |R_k| > |D_k|
\end{array}
\right. \label{eq:region_lambda}
\end{equation}

\begin{equation}
\mu_{(k)}=\left\{
\begin{array}{ll}
\frac{|\hat{D}_k| + |D_k| - |R_k|}{t_c}, & |R_k| \leq |D_k|\\
\frac{|\hat{D}_k|}{t_c}, & |R_k| > |D_k|
\end{array}
\right. \label{eq:region_mu}
\end{equation}

\noindent where $|\hat{R}_k|$ and $|\hat{D}_k|$ are the numbers of predicted riders and future rejoined drivers in region $a_k$ during $[\bar{t}, \bar{t} + t_c]$. Next, after retrieving all the valid pairs, we sort them based on their idle ratios calculated with Equation \ref{eq:idle_ratio} (line 7). Note that, the expected idle time $ET(\lambda_{(k)},\mu_{(k)} )$ is determined by the arrival rates, $\lambda_{(k)}$ and $\mu_{(k)}$,  of new riders and rejoined drivers in the destination region $a_k$, thus we only need to estimate that for each region but not for each rider-and-driver pair individually. In each iteration of the while-loop (lines 8-12), we select the rider-and-driver pair $\langle r_i, d_j\rangle$ having the smallest idle ratio and remove its related pairs, $\langle r_i, .\rangle$ and $\langle., d_j\rangle$, of rider $r_i$ and driver $d_j$ from $I_v$ (since each driver only can serve one rider at one time). The selected pair $\langle r_i, d_j\rangle$ is added in $I_{\bar{t}}$ (line 10) and all the selected pairs $I_{\bar{t}}$ will be finally returned (line 13).

\noindent \textbf{Complexity Analysis.} Let the number of total waiting riders be $m$, the number of total available drivers be $n$ and the number of total regions be $x$. Assume riders and drivers be evenly distributed in $x$ regions and $x$ is much smaller than $m$ and $n$. In lines 3-6 of algorithm \ref{alg:idle_ratio_greedy}, retrieving all the valid rider-and-driver pairs needs $O(\frac{mn}{x})$. To sort the valid pairs in $I_v$ needs $O(\frac{mn}{x}\log_2(\frac{mn}{x}))$ (line 7). In each iteration of the while-loop (lines 8 - 12 of Algorithm \ref{alg:idle_ratio_greedy}), selecting the pair having the smallest idle ratio from sorted $I_v$ needs $O(1)$ (lines 9-10); updating $\mu_{(k)}$ and the idle ratio of average $\frac{mn}{x^2}$ related pairs  needs $O(\frac{mn}{x^2})$ (line 11); removing the related valid pairs $\langle r_i, .\rangle$ and $\langle., d_j\rangle$ from $I_v$ needs $O(\max(\frac{n}{x}, \frac{m}{x}))$ (line 12). Since in each iteration, at least one rider and one driver will be matched, thus there will be at most $\min(m,n)$ iterations. Then the complexity of the while-loop is $O(\frac{\min(m,n)mn}{x^2})$. Thus, the complexity of Algorithm \ref{alg:idle_ratio_greedy} is $O(\max(\frac{mn}{x}\log_2(\frac{mn}{x}), \frac{\min(m,n)mn}{x^2}))$. If we consider $x$ as a constant number, and $m$ is linearly related to $n$, the complexity can be considered as $O(n^3)$.

\subsection{The Local Search Algorithm}

In the idle ratio oriented greedy approach, we greedily select the pair having the ``current'' smallest idle ratio. However, the arrival rate $\mu_k$ of rejoined drivers in region $a_k$ will change after selecting some riders whose destinations are in $a_k$. \revision{Thus, the idle ratios of early selected rider-and-driver pairs may slightly increase in later iterations.} To overcome this shortcoming in the idle ratio oriented greedy approach,  we will propose a local search algorithm to improve the results, which keeps searching for rider-and-driver pairs $\langle r'_i, d_j\rangle$ with a smaller idle ratio for driver $d_j$ and update the assigned rider of $d_j$ to $r'_i$ until no such pairs can be found.

\begin{algorithm}[t]
	\DontPrintSemicolon
	\KwIn{\small A set of Regions $A$, current timestamp $\bar{t}$}
	\KwOut{\small A set of rider-and-driver dispatching pairs $I_{\bar{t}}$}
	Obtain a set, $I_{\bar{t}}$, of pairs with Algorithm \ref{alg:idle_ratio_greedy}\;

	\Do{$FLAG$ is $True$}{
		$FLAG \gets False$\;
		\ForEach{$\langle r_i, d_j\rangle \in I_{\bar{t}}$}{
			\ForEach{$r'_i \in R_j$}{
				\If{$IR(r'_i, d_j) < IR(r_i, d_j)$}{
					update $\langle r_i, d_j\rangle$ to $\langle r'_i, d_j\rangle$\;
					$FLAG \gets True$\;
				}
			}
		}
	}

	\Return $I_{\bar{t}}$\;
	\caption{Local Search Algorithm}
	\label{alg:local_search}
\end{algorithm}

Specifically, in Algorithm \ref{alg:local_search}, we first obtain a set, $I_{\bar{t}}$, of rider-and-driver pairs for current timestamp $\bar{t}$ through Algorithm \ref{alg:idle_ratio_greedy} (note that, we can also obtain $I_{\bar{t}}$ through any other algorithms). Then, in each iteration, we check whether the rider of a pair $\langle r_i, d_j \rangle \in I_{\bar{t}}$ can be replaced by any other valid rider $r'_i\in R_j$ for $d_j$, where $R_j$ is the valid riders to $d_j$. If no replacement happens, we will return the updated set, $I_{\bar{t}}$, of the selected rider-and-driver pairs.

We prove our local search algorithm can converge. We prove it in the below lemma.

\begin{lemma}
The local search algorithm (Algorithm \ref{alg:local_search}) can converge.
\end{lemma}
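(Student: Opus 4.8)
The plan is to prove termination by exhibiting a monovariant: a quantity attached to the current assignment $I_{\bar t}$ that strictly decreases every time the algorithm performs a swap, together with the observation that this quantity can take only finitely many values. Concretely, I would take the potential
\[
\Phi(I_{\bar t}) \;=\; \sum_{\langle r_i,d_j\rangle \in I_{\bar t}} IR(r_i, d_j),
\]
where $IR(\cdot,\cdot)$ is the idle ratio defined in Equation~(\ref{eq:idle_ratio}). Each idle ratio is nonnegative (and in fact lies in $[0,1)$), so $\Phi$ is bounded below by $0$; what really matters is that $\Phi$ ranges over a finite set, which I establish in the third step below.

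The first step is to show that each execution of the inner body of the do--while loop that sets $FLAG$ to $True$ strictly decreases $\Phi$. Such an execution (line~7 of Algorithm~\ref{alg:local_search}) replaces a single pair $\langle r_i, d_j\rangle \in I_{\bar t}$ by $\langle r'_i, d_j\rangle$ exactly when the test $IR(r'_i,d_j) < IR(r_i,d_j)$ in line~6 succeeds; no other pair of $I_{\bar t}$ is modified. Hence $\Phi$ decreases by precisely $IR(r_i,d_j) - IR(r'_i,d_j) > 0$. The second step assembles this over a full pass of lines~4--9: if at least one swap occurs during the pass then $\Phi$ is strictly smaller at the end of the pass than at its start, and if no swap occurs then $FLAG$ remains $False$ and the loop exits. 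Therefore the values of $\Phi$ measured at the start of successive passes form a strictly decreasing sequence until the algorithm returns.

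The third step bounds the number of passes. At every point $I_{\bar t}$ is an assignment of available drivers to valid riders drawn from the finite rider set $\bigcup_k R_k$ and the finite driver set $\bigcup_k D_k$ of the batch; there are only finitely many such assignments, hence only finitely many possible values of $\Phi$. A strictly decreasing sequence taken from a finite set is finite, so after finitely many passes a pass occurs with no swap, $FLAG$ stays $False$, and Algorithm~\ref{alg:local_search} terminates. In particular no assignment $I_{\bar t}$ is ever revisited.

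The one subtlety I would be careful about --- and which I expect to be the main obstacle --- is the dependence of $IR(r_i,d_j) = \tfrac{ET(\lambda_{(k)},\mu_{(k)})}{cost(s_i,e_i)+ET(\lambda_{(k)},\mu_{(k)})}$ on the estimated arrival rate $\mu_{(k)}$ of the destination region of $r_i$: a swap moves a driver from one destination region to another and so, in principle, perturbs the $\mu_{(k)}$ values and hence the idle ratios of other pairs that share those destination regions, breaking the "no other pair is modified" claim in step~one. I would resolve this the way the pseudocode implicitly does: fix the rate estimates $\lambda_{(k)},\mu_{(k)}$ computed once from the batch snapshot (lines~5--6 of Algorithm~\ref{alg:idle_ratio_greedy}) for the entire local-search phase, so that $IR$ is a fixed function of the pair and the monovariant argument goes through verbatim. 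If instead one re-estimates the rates after every swap, termination still plausibly holds by taking the augmented state $(I_{\bar t},(\mu_{(k)})_k)$ --- finite, since the region driver counts are integers in a bounded range --- but ruling out cycles of the augmented state then needs extra care, which is why I would adopt the fixed-rate reading.
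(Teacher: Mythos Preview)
Your potential-function argument is clean, but it establishes convergence only for the fixed-rate reading you explicitly adopt, and that is not the algorithm the paper is analyzing. The stated motivation for Algorithm~\ref{alg:local_search} is precisely that assigning a pair changes the rejoin rate of the destination region, so that ``the idle ratios of early selected rider-and-driver pairs may slightly increase in later iterations''; and the paper's own proof of the lemma reasons explicitly about these changes (e.g., ``After $d$ switches to $r_v$, the number of rejoined drivers in region $a_u$ will decrease, and $IR(r'_u,d')$ will also decrease''). With rates frozen the lemma becomes nearly trivial, so your proof---though internally sound---addresses a different and easier statement than the one intended.

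For the intended dynamic-$\mu$ setting, your monovariant breaks exactly where you suspected it might. When $d_j$ swaps from $r_i$ (destination region $a_u$) to $r'_i$ (destination region $a_v$), the rate $\mu_{(v)}$ increases, hence $ET(\lambda_{(v)},\mu_{(v)})$ and therefore $IR(r,d)$ increase for \emph{every} assigned pair whose rider ends in $a_v$, including the newly swapped-in pair itself; the pre-swap gap $IR(r_i,d_j)-IR(r'_i,d_j)$ need not dominate these increases, so $\Phi$ can go up. Your fallback of a finite augmented state space does not by itself exclude cycles. The paper instead argues by contradiction: assuming some driver $d$ oscillates between two riders, it traces which other drivers must have switched into $d$'s former destination region to flip the governing $IR$ inequality, and then argues those drivers have no incentive to switch back out, so the oscillation cannot be sustained. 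That argument engages directly with the cross-pair coupling through $\mu$ that your potential cannot absorb.
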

\begin{proof}
Assume Algorithm \ref{alg:local_search} cannot converge. Then, there is at least one driver $d$ who keeps switching between two riders $r_u$ and $r_v$. When $d$ selects $r_u$, we have $IR(r_u, d)<IR(r_v, d)$; \revision{otherwise, we have $IR(r_u, d) \geq IR(r_v, d)$.} We denote the regions where $r_u$ and $r_v$ will end as $a_u$ and $a_v$, respectively.

\revision{Since the travel costs of $r_u$ and $r_v$ do not change, different $ET(\lambda_{u}, \mu_{u})$ and $ET(\lambda_{v}, \mu_{v})$ lead to different $IR(r_u,d)$ and $IR(r_v,d)$. Specifically, according to the definition of $IR(r,d)$ in Equation \ref{eq:idle_ratio}, $IR(r_u,d)$ is positively correlated with $ET(\lambda_{u}, \mu_{u})$ (e.g., when $ET(\lambda_u, \mu_u)$ increases, $IR(r_u,d)$ will also increase). When more drivers rejoin in region $a_u$, the expected waiting time $ET(\lambda_{u}, \mu_{u})$ will increase.

Let driver $d$ select rider $r_u$ at the beginning. If driver $d$ switches from $r_u$ to $r_v$ in some iteration $\zeta$, there must be some more rejoined drivers switch to region $a_u$, which leads to $IR(r_u, d)>IR(r_v, d)$. Thus, there must be at least one other driver $d'$ who  switches from his/her valid rider $r'_v$ to a new rider $r'_u$ whose destination is also in region $a_u$ (i.e., $IR(r'_u, d')<IR(r'_v, d')$).  After $d$ switches to $r_v$, the number of rejoined drivers in region $a_u$ will decrease, and $IR(r'_u, d')$ will also decrease. As a result, $d'$ will not switch back to $r'_v$. Since no drivers will switch out from region $a_u$, $IR(r_u, d)$ will at least not decrease. As a result, $d$ will not switch back to $r_u$, which is contradicted with the assumption that $d$ keeps switching between $r_u$ and $r_v$. Thus Algorithm \ref{alg:local_search} can converge.}
\end{proof}

%

\noindent \textbf{Complexity Analysis.} Let the number of total waiting riders be $m$, the number of total available drivers be $n$ and the number of total regions be $x$. Assume that riders and drivers are evenly distributed in $x$ regions and $x$ is much smaller than $m$ and $n$. The number of total valid rider-and-driver pairs will be $O(\frac{mn}{x})$. The number, $|R_j|$, of valid riders for driver $d_j$ will be $O(\frac{n}{x})$. Then each iteration of the while-loop needs $O(\frac{mn^2}{x^2})$. Let $L_{max}$ be the maximum iteration numbers, then the complexity of Algorithm \ref{alg:local_search} will be $O(L_{max}\frac{mn^2}{x^2})$. If we consider $x$ and $L_{max}$ as constant numbers, and $m$ is linearly related to $n$, the complexity can be considered as $O(n^3)$.
\section{Experimental Study}
\label{sec:experimental}

In this section, we show the efficiency and effectiveness of our queueing-theoretic framework with different vehicle dispatching algorithms embedded through experimental studies on both synthetic and real datasets. 

\subsection{Data Sets}
We use both real and synthetic data to test our framework. Specifically, for the real data set, we use the taxi trip data sets in NYC~\cite{nyctlc}. 

\noindent \textbf{New York Taxi Trip Dataset.}
New York Taxi and Limousine Commission (TLC) Taxi Trip Data \cite{nyctlc} is a dataset recording the information of taxi trips  in New York, USA. The records are collected and provided to the NYC Taxi and Limousine Commission technology under the Taxicab \& Livery Passenger Enhancement Programs (TPEP/LPEP \cite{TPEP}). Trip records can be categories as three types: yellow taxi, green taxi and FHV (For Hire Vehicle). However, due to the privacy issues, only the locations of yellow taxi can be access in the dataset long time ago. In addition, the number of FHV and green taxi records is much smaller than that of yellow taxi. Thus, we only use the taxi trip records of yellow taxis in our experiments. Each trip record includes its pick-up and drop-off taxi-zones, GPS locations and timestamps, the number of passengers and the total travel cost.  In our experiment, we use  taxi trip data records from  January 1st, 2013 to  May 20th,  2013 as training data set and  May  28th, 2013 as the test data set. \revision{In the taxi records of May  28th, 2013, there is 282,255 orders. Figure \ref{fig:order_sample} shows the pick-up locations of orders  from 8:00 A.M. to 8:45 A.M. in New York.}

\begin{figure}[t!]\centering\vspace{-2ex}
	\scalebox{0.35}[0.35]{\includegraphics{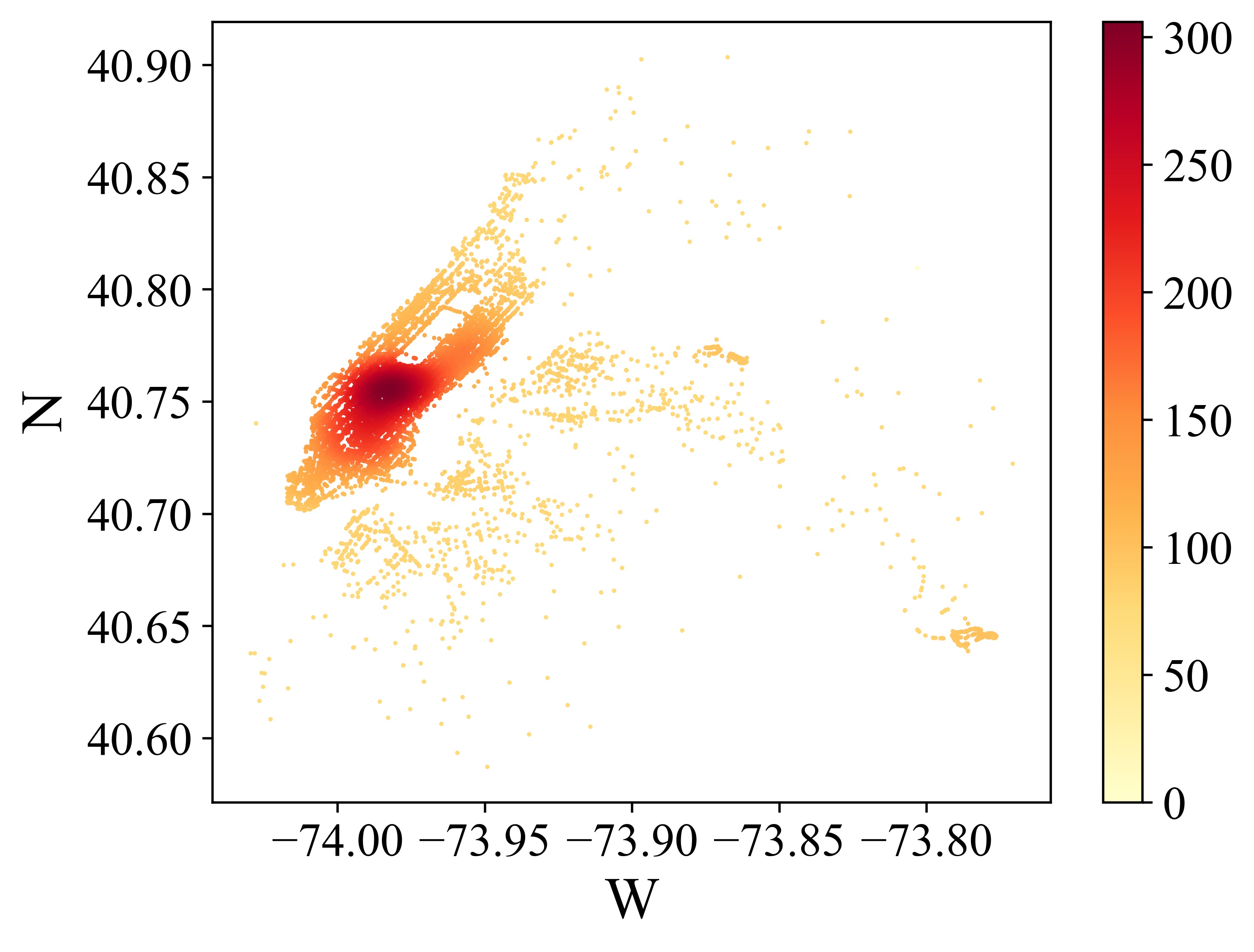}}\vspace{-3ex}
	\caption{\small  Distribution of Orders. }
	\label{fig:order_sample}
\end{figure}

\subsection{Experimental Configurations}

For the experiments on the real data set, we use the pickup location and timestamp of a taxi trip record to initialize the source location $s_i$ and the posting timestamp $t_i$ of a ride order $r_i$. Then the dropoff location of the taxi trip record is used to set the destination $e_i$ of the ride order. \revision{Thus, there are 282,255 riders in our experiments.} For the pickup deadline $\tau_i$ of rider $r_i$, we configure it by adding a uniform random noise $\tau' \in [1, 10]$ and a base pickup waiting time $\tau$ (configured with the setting in Table \ref{tab:settings}) to the posting timestamp $t_i$ (e.g., $\tau_i = t_i + \tau' + \tau$). \revision{To initialize the origin locations of drivers at the beginning timestamp $0$, we first randomly select a set of order records and use their pick-up locations as the origin locations of drivers. The number of drivers are configured as the parameter $n$ in Table \ref{tab:settings} from 1K to 5K.} The whole space of New York City area (i.e., \ang{-73.77}$\sim$\ang{-74.03}, \ang{40.58}$\sim$\ang{40.92}) is evenly divided into 16$\times$16 grids.

In our experiment, we run the batch process every time period $\Delta$. To estimate the arrival rate of new riders and rejoined drivers, we look up a time window of length $t_c$ with the ``current'' timestamp $\bar{t}$ as the beginning time of the time window.

\subsection{Approaches and Measurements}
We conduct experiments to evaluate the effectiveness and efficiency of our queueing-theoretic  vehicle dispatching framework with two batch processing vehicle dispatching algorithms, namely \textit{idle ratio oriented greedy} (IRG) and local search (LS), in terms of the total revenue and the average batch running time. Note that, we set the parameter $\alpha$ as 1, such that the total revenue is equal to the total serving time (e.g., the total travel cost of   served ride orders).

Specifically, for IRG (or LS) we can further have two different combinations: IRG-P and IRG-R (or LS-P and LS-R), which use the predicted taxi demand and the real taxi demand, respectively. In addition, we also compare our approaches with three baseline methods: (1) \textit{long trip greedy} (LTG), which greedily assigns orders with the highest revenue to available taxis; (2) \textit{nearest trip greedy} (NEAR), which greedily assigns the nearest order to each available taxi; (3) \textit{random} (RAND), which randomly assigns orders to available taxis. We also compare our methods with the state-of-the-art solution, POLAR \cite{tong2017flexible}, on car-hailing problem, which utilizes the predicted number of orders and drivers to conduct an offline bipartite matching first, then uses the offline result as a blueprint to guide the online task matching.  In addition, we report the upper bound (UPPER) by summing up the revenue of  the most expensive orders that can be served by idle drivers ignoring their pick-up distances in each batch. Our framework can also handle the target of maximizing the number of total served orders through modifying IRG to select the order with the smallest summation of its travel cost and expected idle time in each iteration. Due to space limitation, please refer to Appendix C of our technical report \cite{report} for more details of  maximizing the number of  total served orders.

\begin{table}[t]
	\begin{center}
		{\small
			\caption{\small Experimental Settings.} \label{tab:settings}\vspace{-2ex}
			\begin{tabular}{l|l}
				{\bf \qquad \qquad \quad Parameters} & {\bf \qquad \qquad \qquad Values} \\ \hline \hline
				the number, $n$, of drivers  & 1K, 2K, {3K}, \textbf{4K}, 5K\\
				base pickup waiting time, $\tau$ (seconds)  & 60, \textbf{120}, 180, 240, 300 \\
				the length of batch interval, $\Delta$ (seconds) & \textbf{3}, 5, 10, 20, 30\\
				the length of time window, $t_c$ (minutes)& \textbf{5}, 10, 15, 20, 40, 60, 80, 100\\
				\hline
			\end{tabular}
		}\vspace{-2ex}
	\end{center}
\end{table} 

Table \ref{tab:settings} shows the settings of our experiments, where the default values of parameters are in bold font. In each set of experiments, we vary one of the parameters and keep other parameters in their default values. \revision{For each experiment, we run the tested approaches on 10 different  generated problem instances and  report their average total revenues and average batch processing times for a whole day (from 00:00:00 to 23:59:59).} All our experiments are conducted on an Intel Xeon X5675 CPU @3.07 GHZ with 32 GB RAM in Java. The code of our queueing-theoretic vehicle dispatching framework and prediction methods can be accessed in our github project \cite{sourcecode}.

{\small
	\begin{table}[t!]
		\centering
		\caption{\small Results of the Estimated Idle Time}\label{tab:idle_estimate} \vspace{-2ex}
		\begin{tabular}{c|c|c|c}
			\#Drivers & MAE (s) & RMSE (\%) & Real RMSE (s) \\\hline\hline
			1K & 2.12 &  5.02 & 8.73 \\
			2K   & 1.89 & 4.76 & 6.89\\
			3K  & 1.78 & 4.53 & 4.43 \\
			4K & 2.04 & 5.11 & 7.04 \\
			5K & 2.22 & 5.47 & 11.24 \\
			6K & 2.54 & 5.93 & 13.81 \\
			7K & 3.20 & 6.45 & 26.39 \\
			8K & 4.34 & 7.43 & 44.43 \\
			\hline
		\end{tabular}
	\end{table}
}

\subsection{Results of the Estimated Idle Time}

In this section, we evaluate the accuracy of our queueing theoretic model on estimating the idle time of the drivers after finishing their assigned tasks. To show the results, we vary the number of drivers from 1K to 8K and keep the other parameters in their default values as shown in Table \ref{tab:settings}. We report the mean average error (MAE), relative root mean square error (RMSE) and real root mean square error  (Real RMSE) of our estimated waiting time periods of drivers compared with their real waiting time periods in Table \ref{tab:idle_estimate}. 

From the results, we find that our queueing theoretic model can achieve good estimated idle time periods of the drivers after finishing their assigned tasks. When the number of drivers increases from 1K to 8K, the MAE, RMSE and real RMSE first decrease then increase. The reason is that our default batch interval is 3 seconds, when the number of drivers is 1K, the drivers can almost immediately receive new task after they finish their assigned tasks. However, due to the batch process, they need to wait until next batch process, which in fact leads to the major difference between the estimated waiting time periods and the real ones. When the number of drivers increases from 1K to 4K, more and more drivers needs to wait for a while to receive a new task after they finish their last tasks. Then the estimation errors caused by  the batch processes become tiny. When the number of drivers continues increasing from 4K to 8K, the idle time of drivers also increases obviously. The MAE and real RMSE of the results of our queueing theoretic model also increases obviously, however the relative RMSE  only increases 2.32\%, which shows that our estimation model is accurate. \revision{Figure \ref{subfig:idle_p} shows the predicted idle times for each region achieved by our queueing theoretic model, which is very close to the real idle times of drivers (shown in Figure \ref{subfig:idle_r})  during the running of our vehicle dispatching framework.}

\begin{figure}[t!]\centering
	\subfigure[][{\small Predicted Idle Time}]{
		\scalebox{0.032}[0.032]{\includegraphics{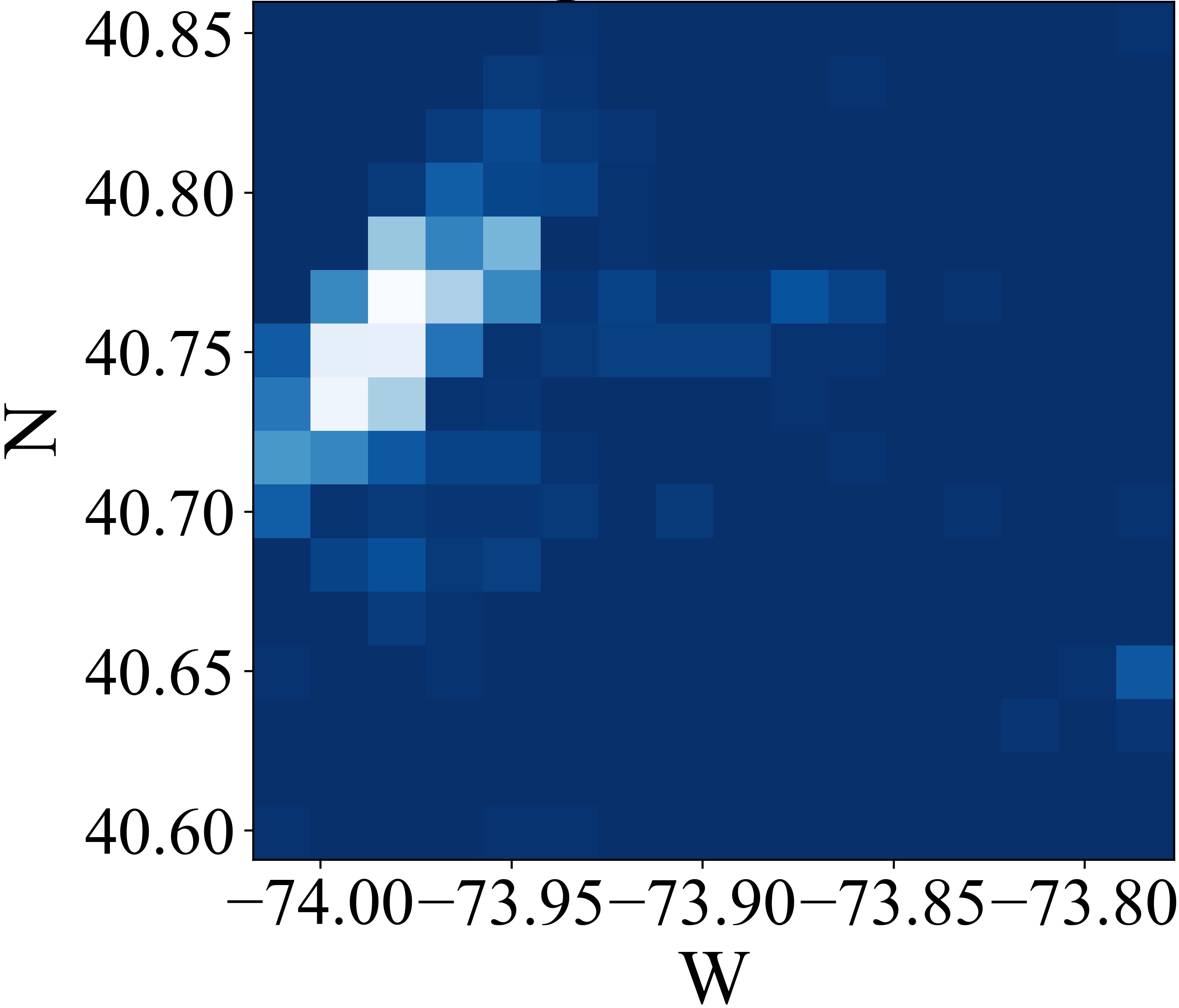}}
		\label{subfig:idle_p}}
	\subfigure[][{\small Real Idle Time}]{
		\scalebox{0.032}[0.032]{\includegraphics{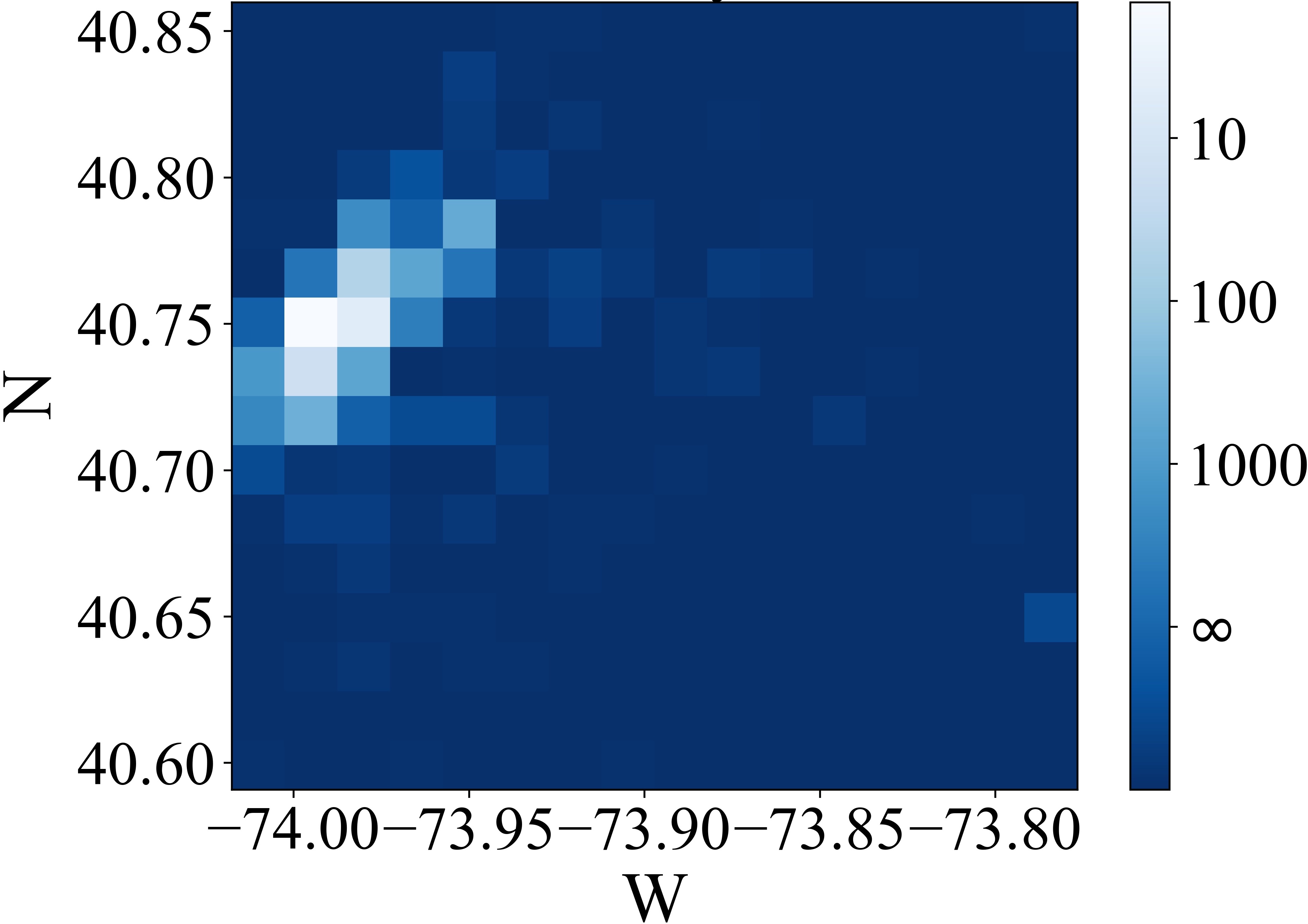}}
		\label{subfig:idle_r}}
	\vspace{-3ex}
	\caption{\small Comparison of Predicted and Real Idle Time.}
	\label{fig:prediction_comp} 
\end{figure}

\revision{
\subsection{Effects of the  Prediction Methods}
In this section, we evaluate the effects of  prediction methods for three prediction related  approaches, POLAR, IRG and LS.  Table \ref{tab:effect_prediction} shows the achieved total revenue of three prediction related approaches with default parameters (in Table \ref{tab:settings}) by using different prediction methods (introduced in Section \ref{sec:demand_prediction}). From the results, we can find that: a) the more accurate the prediction method is, the higher total revenue that each approach can achieve; b) LS is the best approach on utilizing the prediction information to improve the total revenue. 

{\small
\begin{table}[h!]
	\centering
	\caption{Results of Effects of Prediction Methods ($10^8$)}\label{tab:effect_prediction} \vspace{-2ex}
	\begin{tabular}{c|c|c|c|c|c}
			  & HA & LR&GBRT&DeepST &Real \\\hline\hline
		IRG   & 2.2460 & 2.3203 & 2.3446 & 2.3756 &2.3899\\
		LS   & 2.2921 & 2.3725 &2.4267 &2.4625 &2.4727\\
		POLAR  & 2.0460 & 2.2293 & 2.2767& 2.2953 &2.3285\\
		\hline
	\end{tabular}
\end{table}
}
}

\subsection{Experimental Results of Vehicle Dispatching Approaches}

In this section, we show the effects of the number, $n$, of drivers, the base pickup waiting time $\tau$, the length, $\Delta$, of batch interval, and the length, $t_c$, of time window to estimate the arrival rates of riders and rejoined drivers.

\noindent \textbf{Effect of the Number, $n$, of Drivers.} Figure \ref{fig:n_driver} illustrates the experimental results on varying the number of drivers from 1K to 5K, where other parameters are in their default values. In Figure \ref{subfig:n_driver_score}, when the number of drivers increases from 1K to 5K, all the tested approaches can achieve results with increasing total revenue. The reason is that when more drivers are available, more riders can be served before their pickup deadlines. When the number of drivers is 1K, our IRG and LS approaches can achieve higher total revenue than RAND, LTG, NEAR and POLAR. The difference between the results of our IRG and LS are small. When the number of drivers increases, the advantage of our IRG and LS in terms of the total revenue become narrow. We also notice that when the number of drivers reaches 5K, all the tested approaches can achieve results with total revenue close to the upper bound. The reason is that when there are 5K drivers, almost all the riders can be served as long as he/she joins the platform. \revision{Our LS can achieve from 78.1\% to 92.0\% of the upper bound revenue when the number of drivers increases from 1K to 5K.} To clearly show the differences between the total revenues of our tested approaches, we will not plot out the results of UPPER as they are always same with the results in Figure \ref{subfig:n_driver_score}. 
In Figure \ref{subfig:n_driver_cpu}, when the number of drivers increases, the batch running time of all the tested approaches also increases slightly, which is because in each batch there are more drivers  requiring more time to process. We can see that all the tested approaches can finish each batch processing within 2 seconds, which is unnoticeable to the users and acceptable for the batch processes with 3-second intervals.

\begin{figure}[t!]\centering
	\subfigure{
		\scalebox{0.12}[0.12]{\includegraphics{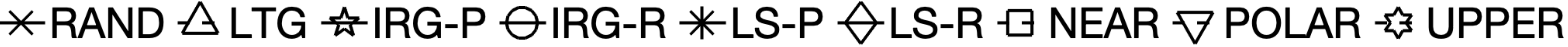}}}\hfill\\
	\addtocounter{subfigure}{-1}\vspace{-2ex}
	\subfigure[][{\small Total Revenue}]{
		\scalebox{0.2}[0.2]{\includegraphics{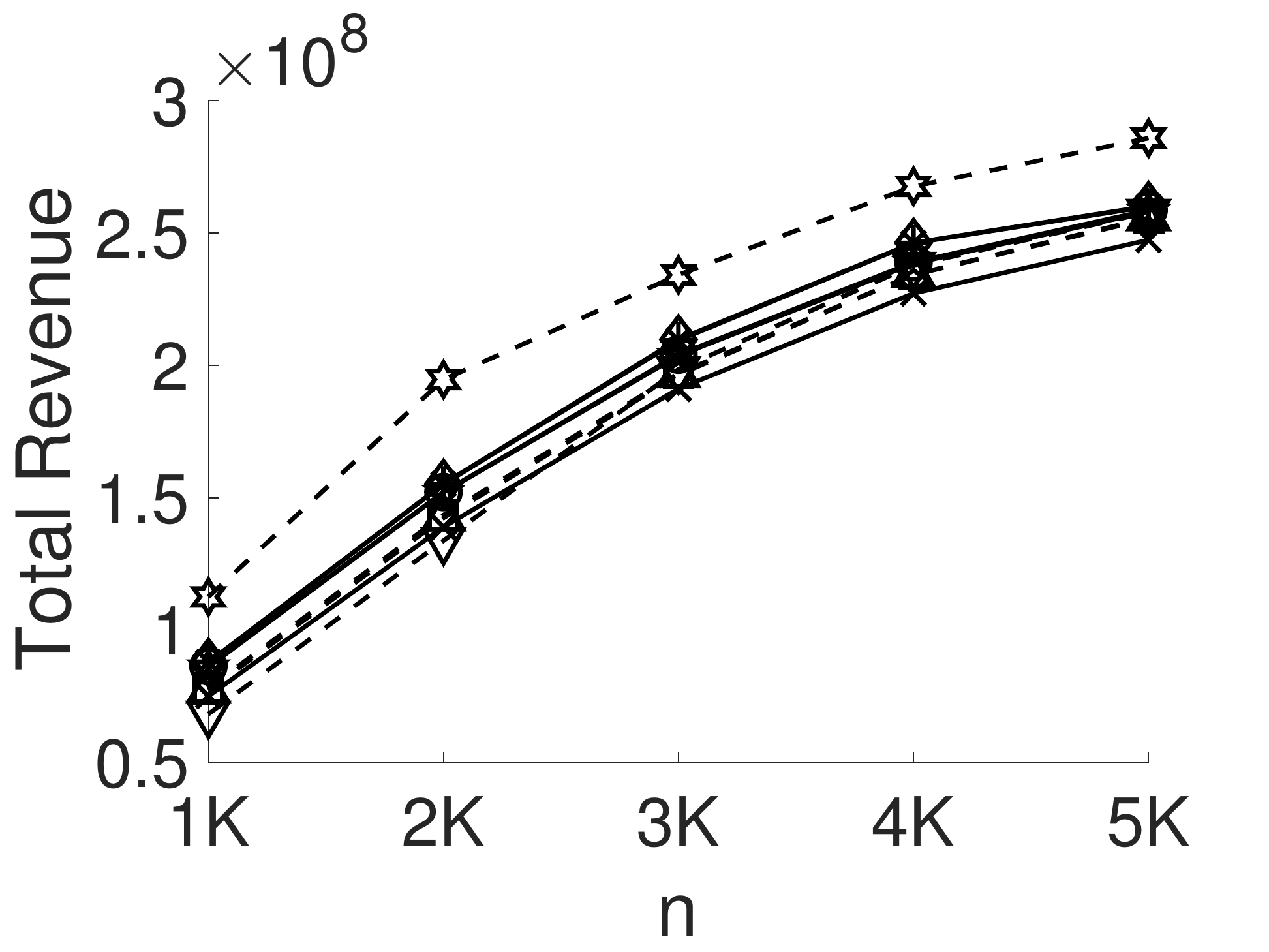}}
		\label{subfig:n_driver_score}}
	\subfigure[][{\small Batch Running Time}]{
		\scalebox{0.2}[0.2]{\includegraphics{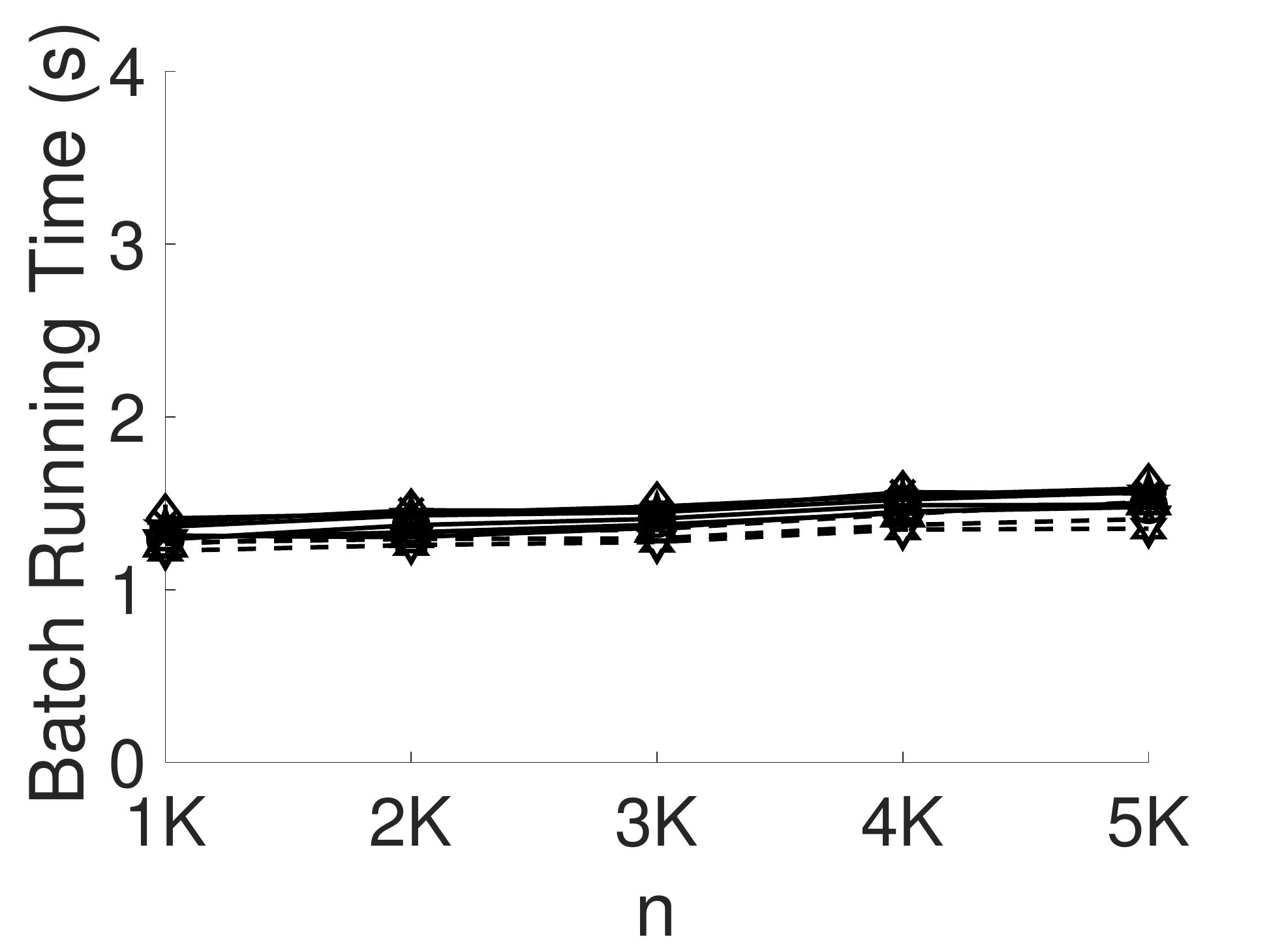}}
		\label{subfig:n_driver_cpu}}\vspace{-2ex}
	\caption{\small Effect of Number of Drivers $n$.}
	\label{fig:n_driver} 
\end{figure}

\begin{figure}[t!]\centering
	\subfigure{
		\scalebox{0.12}[0.12]{\includegraphics{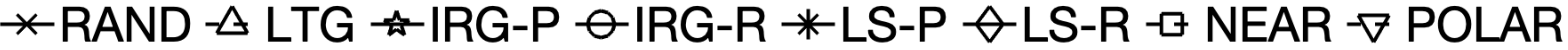}}}\hfill\\
	\addtocounter{subfigure}{-1}\vspace{-2ex}
	\subfigure[][{\small Total Revenue}]{
		\scalebox{0.2}[0.2]{\includegraphics{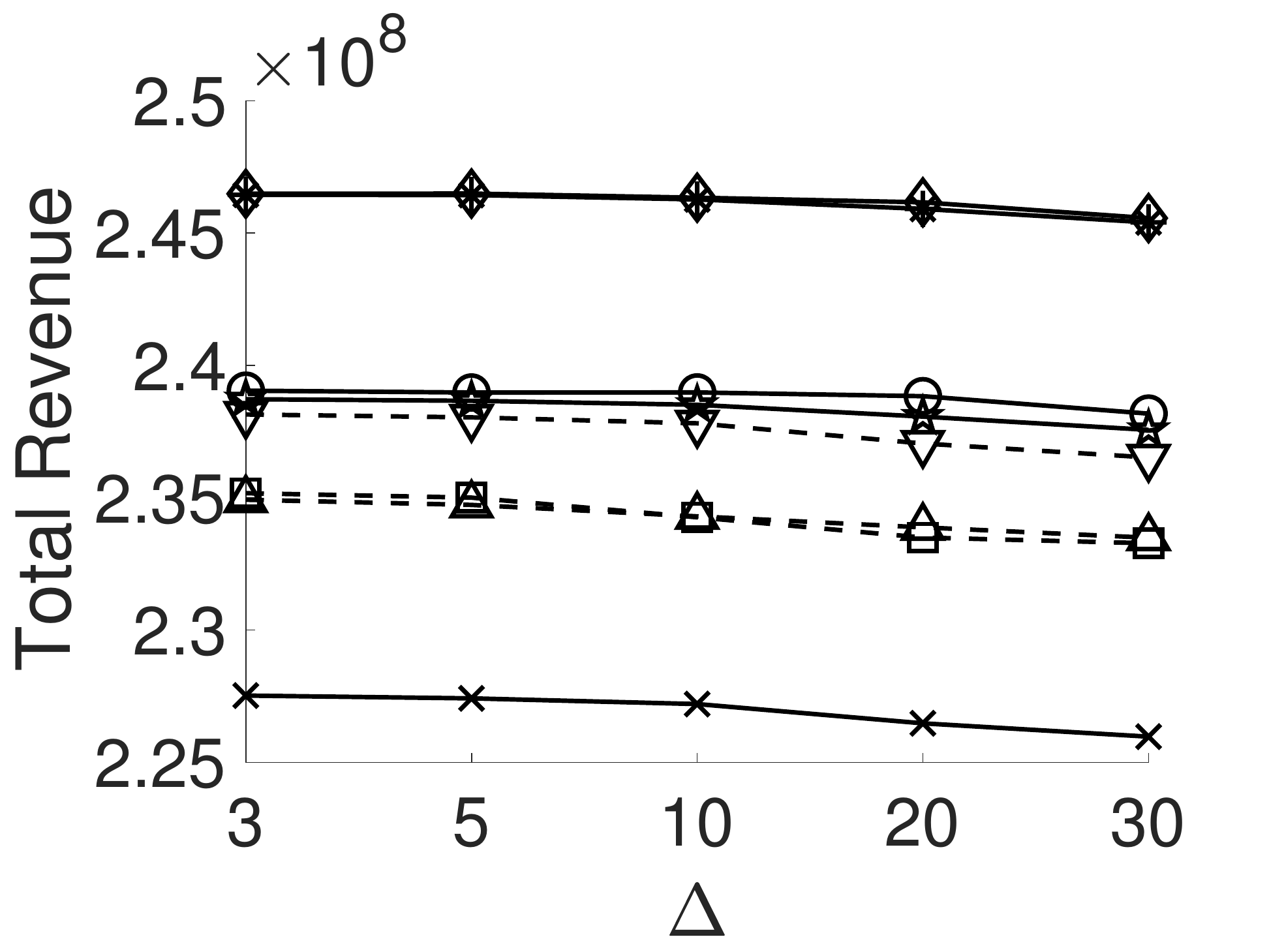}}
		\label{subfig:frame_score}}
	\subfigure[][{\small Batch Running Time}]{
		\scalebox{0.2}[0.2]{\includegraphics{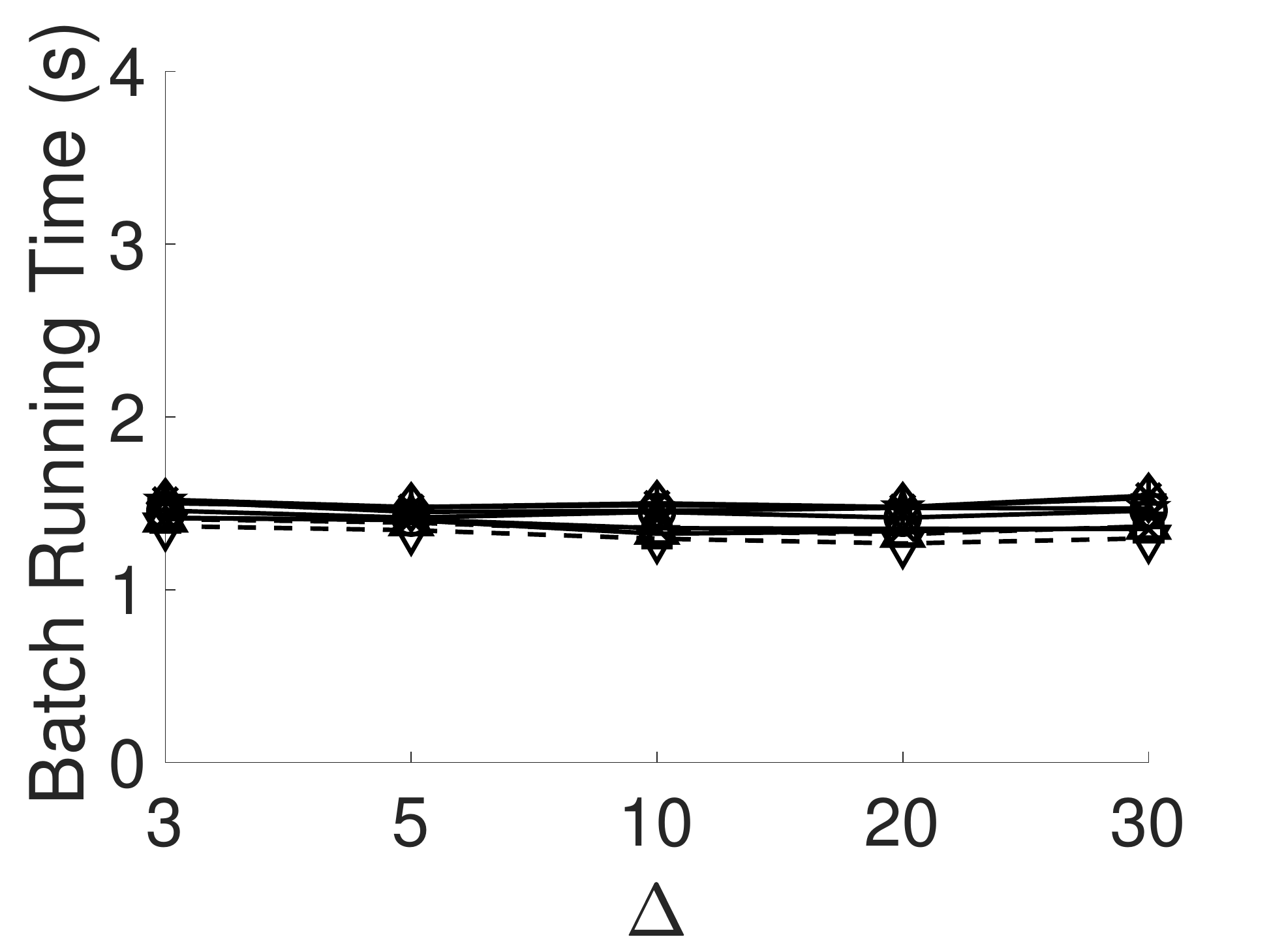}}
		\label{subfig:frame_cpu}}\vspace{-2ex}
	\caption{\small Effect of Batch Length $\Delta$.}
	\label{fig:frame} \vspace{-1ex}
\end{figure}

\noindent \textbf{Effect of the Length, $\Delta$, of Batch Interval.} Figure \ref{fig:frame} shows the experimental results on varying the length, $\Delta$, of the batch interval from 3 to 30 seconds, while other parameters are set to their default values. As shown in Figure \ref{subfig:frame_score}, when the length, $\Delta$, of batch interval increases from 3 to 30 seconds, the total revenues of the results achieved by the tested approaches decrease slightly. The reason is that when the length of the batch interval increases, more riders may be missed before their pickup deadlines within two consecutive batches. In other words, when $\Delta$ increases, the probability of a rider becomes time out will increase during the batch intervals, when the platform does not respond to any riders or drivers. Another reason is that when drivers become available, they also need to wait for the next batch to be assigned with new riders, which also leads to the bad effect on the total revenue. Thus, in real applications, $\Delta$ should not be too large. In addition, we notice that our IRG-P and LS-P can achieve higher total revenues than RAND, LTG, NEAR and POLAR. We find that when we use the ground truth of the taxi demand for our IRG-R and LS-R algorithms, they can achieve higher total revenues than IRG-P and LS-P, which shows the importance of the accuracy of the taxi demand methods. In other words, for the real applications, a more accurate prediction model can bring increases on the total revenue. 
In Figure \ref{subfig:frame_cpu}, the batch running time of the tested approaches slightly increases, since the number of riders and drivers for each batch will increase when $\Delta$ increases. 

\begin{figure}[t!]\centering
	\subfigure{
		\scalebox{0.12}[0.12]{\includegraphics{bar_without_upper-eps-converted-to.pdf}}}\hfill\\
	\addtocounter{subfigure}{-1}\vspace{-2ex}
	\subfigure[][{\small Total Revenue}]{
		\scalebox{0.2}[0.2]{\includegraphics{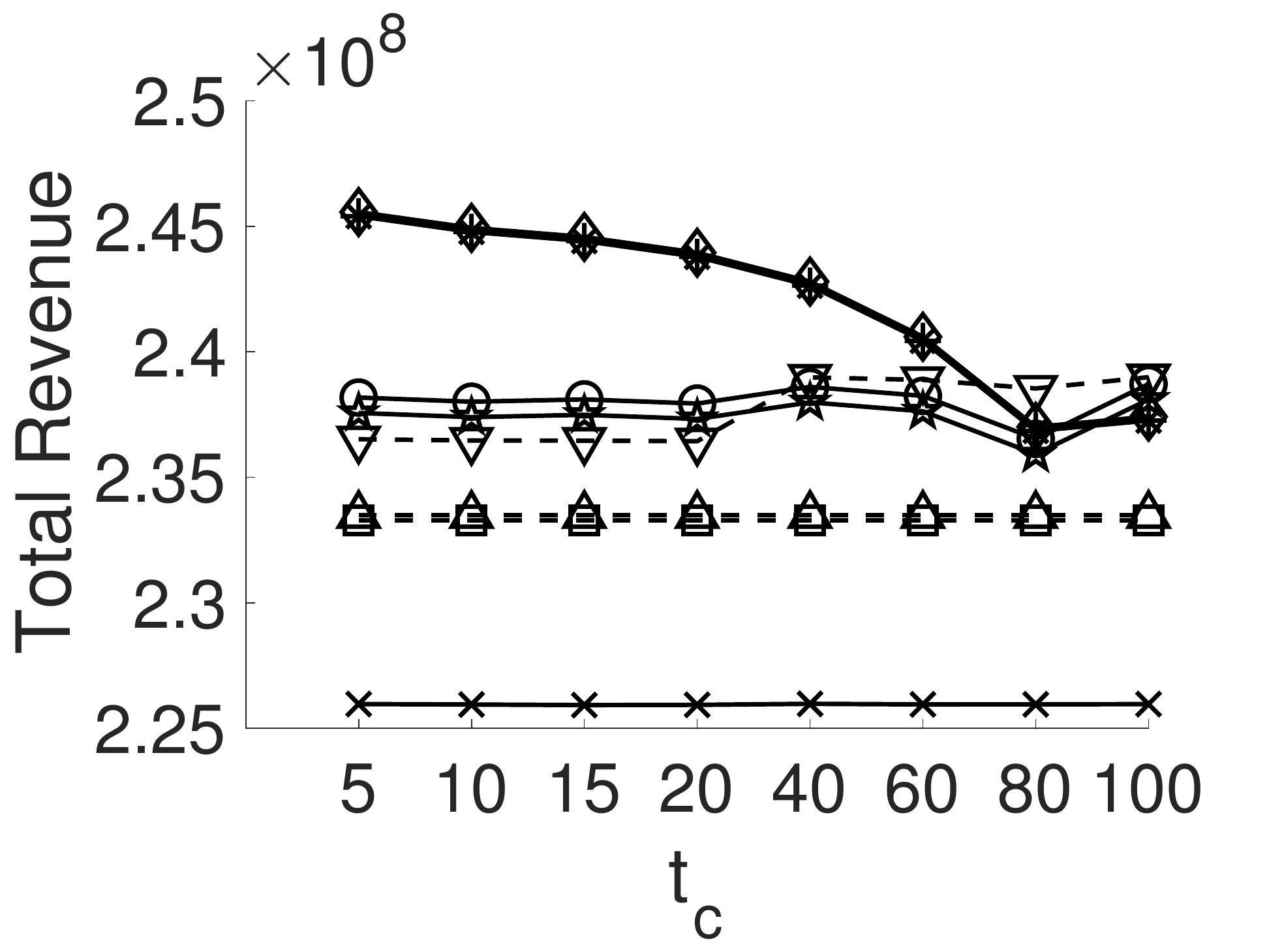}}
		\label{subfig:range_score}}
	\subfigure[][{\small Batch Running Time}]{
		\scalebox{0.2}[0.2]{\includegraphics{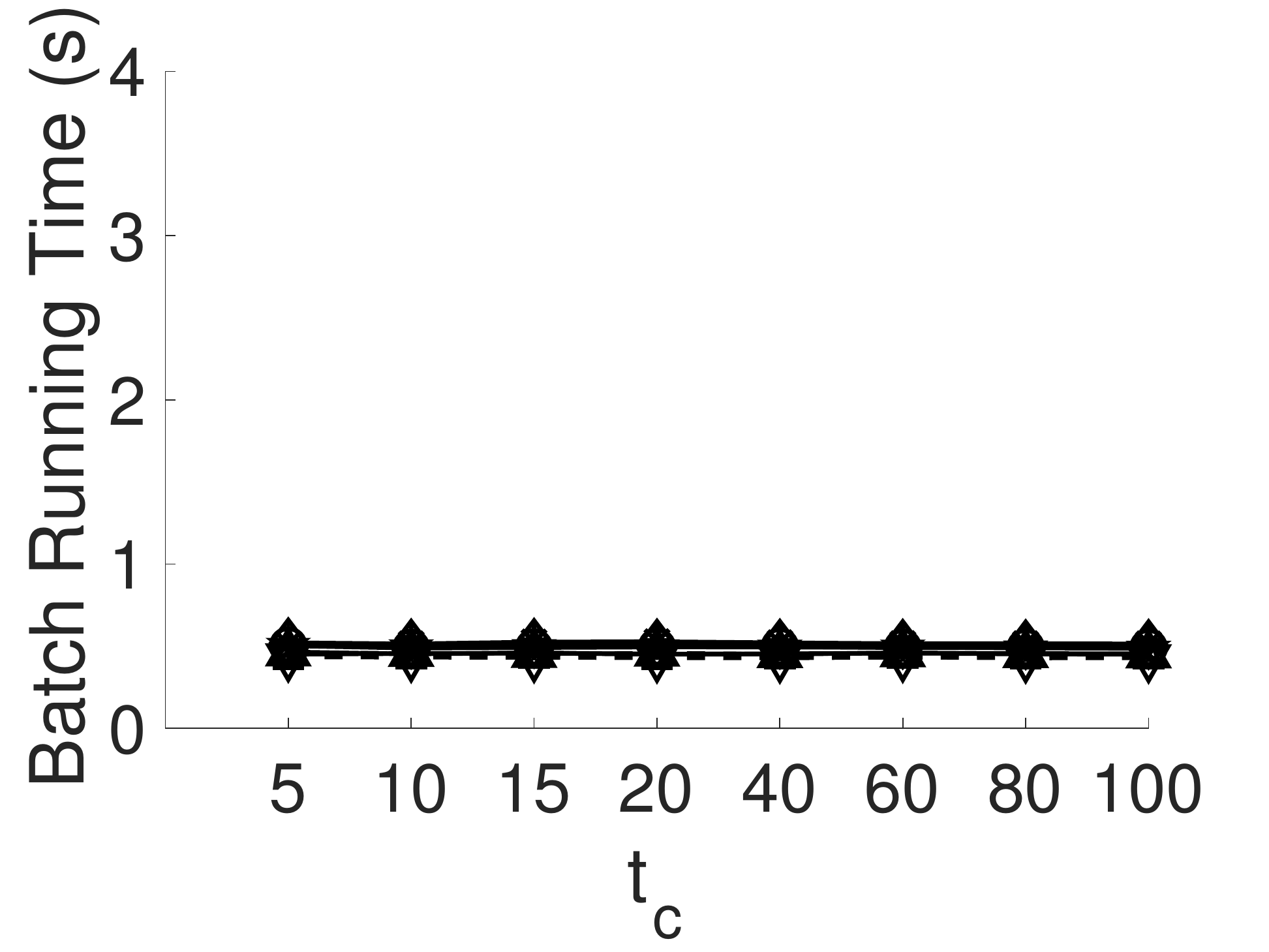}}
		\label{subfig:range_cpu}}\vspace{-2ex}
	\caption{\small Effect of Time Window $t_c$.}
	\label{fig:range} 
\end{figure}
\begin{figure}[t!]\centering
	\subfigure{
		\scalebox{0.12}[0.12]{\includegraphics{bar_without_upper-eps-converted-to.pdf}}}\hfill\\ 
	\addtocounter{subfigure}{-1}\vspace{-2ex}
	\subfigure[][{\small Total Revenue}]{
		\scalebox{0.2}[0.2]{\includegraphics{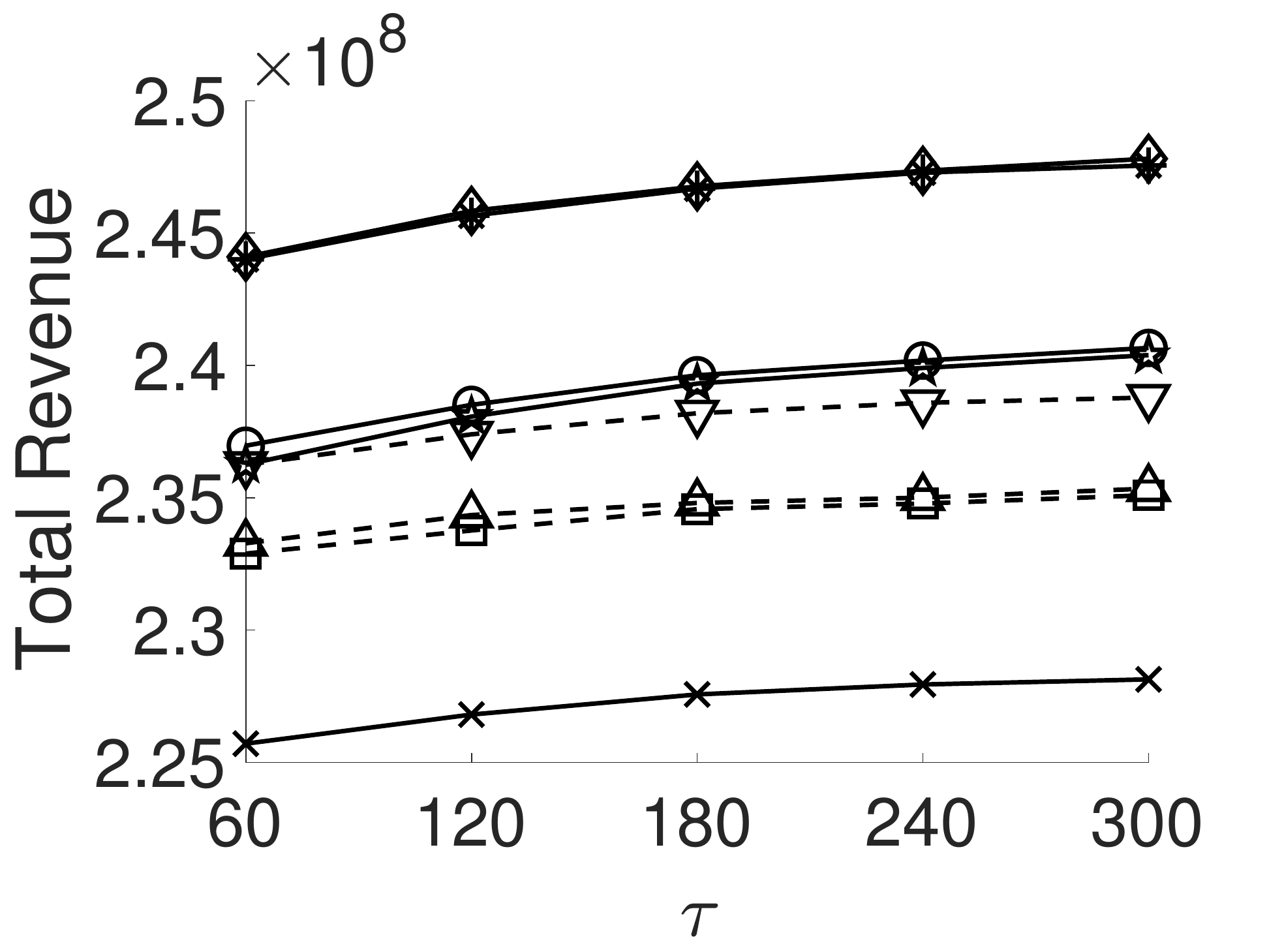}}
		\label{subfig:waitingtime_score}}
	\subfigure[][{\small Batch Running Time}]{
		\scalebox{0.2}[0.2]{\includegraphics{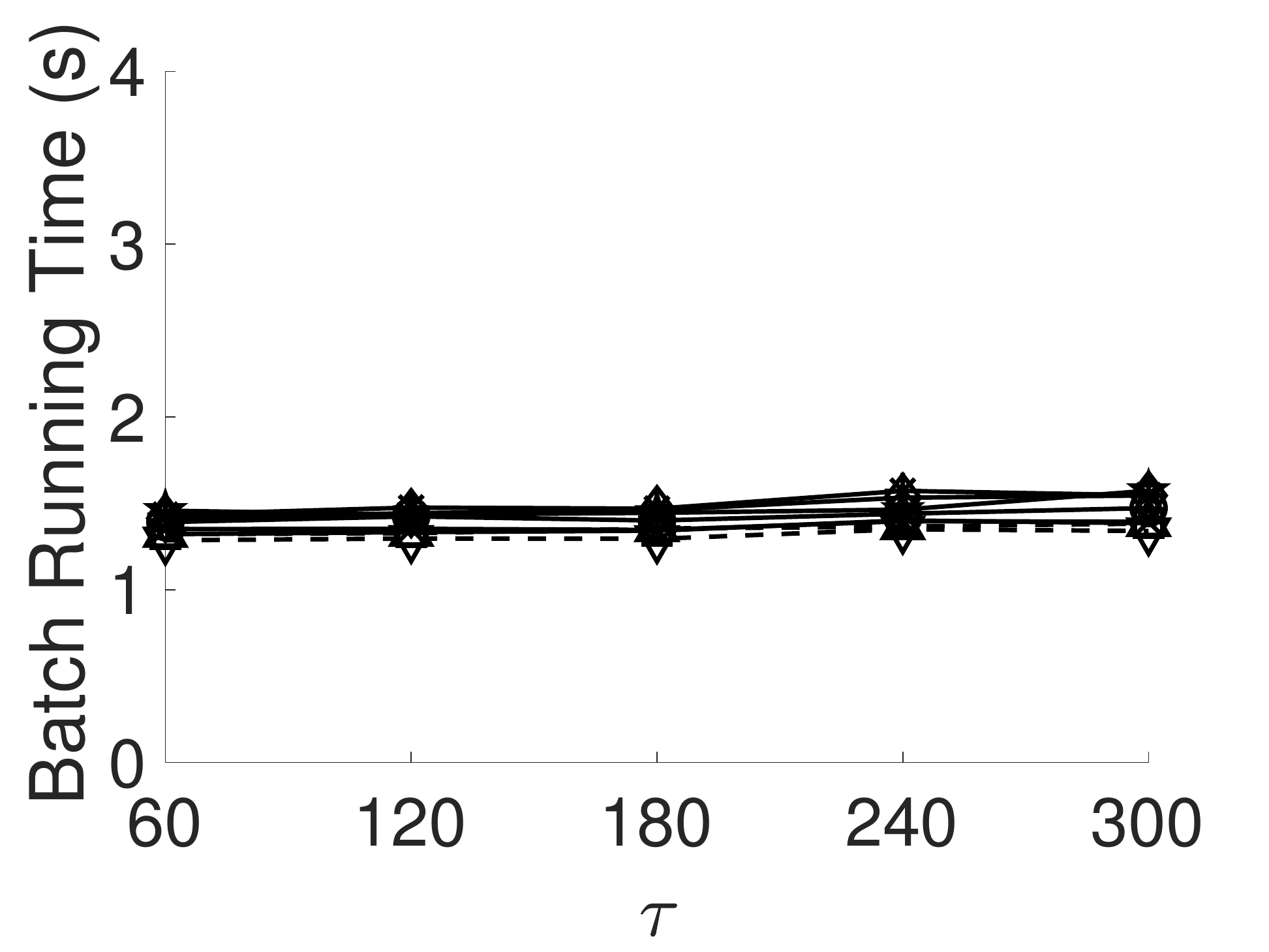}}
		\label{subfig:waitingtime_cpu}}\vspace{-2ex}
	\caption{\small Effect of Base Waiting Time $\tau$.}
	\label{fig:waitingtime} \vspace{-1ex}
\end{figure}

\noindent \textbf{Effect of the Length, $t_c$, of Time Window.} Figure \ref{fig:range} presents the experimental results on varying the length $t_c$ of time window on estimating the arrival rate of new riders and rejoined drivers. In Figure \ref{subfig:range_score}, the total revenue achieved by IRG and LS will decrease when $t_c$ becomes larger than 20 minutes. The reason is that most taxi trips in NYC taxi trip dataset have a travel time of less than 20 minutes \cite{cheng2017utility}. The effect of future rejoined drivers in more than 20 minutes later is almost neglectable for our IRG and LS algorithms. However, when $t_c$  becomes 40 minutes, POLAR can perform better than  itself in the experiment of $t_c$ smaller than 20 minutes.  Thus, in real platform, $t_c$ should not be too large. Since RAND and LTG do not consider the demand and supply of the taxis in future, the length, $t_c$, of time window has no effect on them.  In Figure \ref{subfig:range_cpu}, $t_c$ has no clear effect on the running time of our tested approaches.

\noindent \textbf{Effect of the Base Waiting Time $\tau$.} Figure \ref{fig:waitingtime} illustrates the effect of the waiting time $\tau$ of riders by varying $\tau$ from 60 to 300 seconds while keeping other parameters in their default values. In Figure \ref{subfig:waitingtime_score}, when the waiting time of riders $\tau$ increases, the total revenue of the results achieved by our tested approaches also increases. The reason is that when riders can wait for longer time, the probability that they can be served by some drivers will increase, which is consistent with human intuition. With the help of ground truth of the taxi demand (more accurate than our predicted demand), LS-R can achieve slightly higher total revenue than LS-P.  IRG, LS and their variants  can all surpass RAND, LTG, NEAR and POLAR. In Figure \ref{subfig:waitingtime_cpu}, the batch running time of tested approaches increases slightly when the waiting time of riders increases. The reason is that when riders can wait for longer time and the number of drivers does not change, the number of riders in each batch will also increase, which leads to the processing time of each batch becomes longer.

In summary, LS and IRG can perform better than RAND,  LTG, NEAR and POLAR in terms of total revenue. Our proposed algorithms are more effective when the number of drivers is smaller (e.g., 1K drivers our in experiments). The accuracy of taxi demand prediction method can affect the final results on the total revenue. Thus, taxi demand prediction models with higher accuracy are more valuable for the platform. Our framework is efficient. In all the experiments, the running time of each batch for all the tested approaches is less than 2 seconds, which is affordable for the platform to perform a batch process with 3 seconds for each batch interval.
\section{Related Work}
\label{sec:related}

Recently, online car-hailing platforms develops rapidly,  which has drawn attention from  academia and industry.

Our MRVD problem is related to task assignment in spatial crowdsourcing \cite{kazemi2012geocrowd, cheng2015reliable, cheng2016task, tong2017flexible}, which assign a set of workers to the locations of tasks to conduct subject to various constraints and optimization goals. However, in our MRVD problem, each order has a pickup location and a destination, while each task in spatial crowdsourcing usually has only one required location. In \cite{kazemi2012geocrowd}, based on the publishing models, the authors classified the spatial crowdsourcing in two modes: worker selected task (WST) mode  \cite{deng2013maximizing} and server assigned tasks (SAT) mdoe \cite{cheng2015reliable, cheng2016task, tong2017flexible}. In WST mode, workers select tasks by themselves. In SAT mode, the server/system has the control on assign tasks to workers base on its objectives. In SAT mode, there are two processing styles: online task assignment mode  \cite{tong2017flexible} and batch-based task assignment mode \cite{kazemi2012geocrowd, cheng2015reliable, cheng2016task}. Recently, researchers start to utilize the prediction models to predict the future distributions of workers and tasks to improve the overall performance in a relatively long time period (e.g., 1 day). For instance, researchers build an offline blueprint based on the predicted distributions of workers and tasks, then use it to guide the online task assignment to maximize the total  number of assigned tasks~\cite{tong2017flexible}. Our MRVD targets on maximizing the total revenue of the platform, which cannot apply existing solutions directly. Thus, we develop our queueing theoretic framework, which uses queueing theory to estimate the idle time of drivers based on the predicted number of orders and drivers in each region.

Our MRVD problem is also related to dial-a-ride problem (DARP), which assume a fleet of vehicles located at a common depot, and schedules should be made to accommodate $m$ rider requests based on their pick-up and drop-off time constraint. Existing works on DARP have mainly focused on static offline DARP, where the constraints are known beforehand. The general DARP is NP-hard and intractable, unless its scale is not big (e.g., hundreds of vehicles  and riders) \cite{cordeau2006branch}. \cite{cordeau2003tabu} uses a heuristic method called tabu search to find the neighbourhood solution from current solution, to avoid finding cycle result and local optimum, they forbid the recent visited answers and use some diversification mechanism.

With the emergence of ridesharing business, many riders prefer choosing the ridesharing service, as it is cheaper than non-share car request with limited time delay. The authors \cite{cheng2017utility} designs an algorithm to dispatch the similar rider to the same car with a goal of maximizing the total utility, which includes the rider related utility, vehicle-related utility and trajectory-related utility. In \cite{zheng2018order}, the authors propose a packing-based approach, which first packs the riders together then assigns groups of riders to vehicles. To solve the scheduling problem for a vehicle with a set of assigned riders, authors in \cite{tong2018unified} propose a linear time complex method. However, ridesharing mainly focuses on scheduling and solving conflicts of route-sharable riders to vehicles, which is different from MRVD.

In addition, traffic prediction is also a critical technology in urban city transportation scenario. With accurate prediction, we can foresee the future and make plan to fulfill the long time revenue. There are many models which focusing on predicting the number of orders in the next time slot by integrating temporal and spatial information. \cite{zhang2017deep} proposes a Deep ST model which combines the geographical and historical traffic data together, to decrease the difference between estimated traffic flow number and actual count. With the powerful deep convolutional neural network and rich daily meta data (e.g., holiday and weather), they get the state of art prediction results.
\section{Conclusion}
\label{sec:conclusion}

In this paper, we study the problem of maximum revenue vehicle dispatching problem (MRVD), in which rider requests dynamically arrive and drivers need to serve as many riders as possible such that the entire revenue of the platform is maximized. We prove that the MRVD problem is NP-hard and intractable. Through analyses, we find to maximize the total revenue, we need to give higher priorities to ride orders with long travel cost and less idle time. We propose a queueing-theoretic framework, which predicts the taxi demand (rider orders) offline and schedule the drivers to regions where the idle time of them will be small. Our framework dispatching drivers to riders in a batch-based processing for every $\Delta$ seconds. To handle the batch vehicle dispatching problem, we propose two heuristic approaches, namely idle ratio oriented greedy (IRG) and local search (LS). Through experiments on the real and synthetic data sets, we show the effectiveness and efficiency of our queueing-theoretic vehicle dispatching framework.

\begin{acks}
Lei Chen's work is partially supported by  National Key Research and Development  Program of China Grant No. 2018AAA0101100, the Hong Kong RGC GRF Project 16202218, CRF Project C6030-18G, C1031-18G, C5026-18G, AOE Project AoE/E-603/18, Theme-based project TRS T41-603/20R, China NSFC No. 61729201, Guangdong Basic and Applied Basic Research Foundation 2019B151530001, Hong Kong ITC ITF grants ITS/044/18FX and ITS/470/18FX, Microsoft Research Asia Collaborative Research Grant, HKUST-NAVER/LINE AI Lab, Didi-HKUST joint research lab, HKUST-Webank joint research lab grants. Peng Cheng's work is sponsored by Shanghai Pujiang Program 19PJ1403300. Xuemin Lin's work is supported by ARC  DP200101338. Libin Zheng's work is supported by the Fundamental Research Funds for the Central Universities, Sun Yat-sen University.
\end{acks}

\balance

\newpage

\bibliographystyle{ACM-Reference-Format}
\bibliography{../references/add}

\appendix

\section{Methods and Results of Offline Demand Prediction}
\label{sec:prediction}


To predict the future order number, we use different kinds of information, such as the order counts from previous time slots, and other meta data (e.g., time of day, day of week and city weather). We use the model of DeepST \cite{zhang2017deep}, which uses previous order numbers from three different time scales: \textit{closeness}, \textit{period}, \textit{trend}. Here, \textit{closeness} means the previous $N$ time slots; \textit{period} indicates the same time in previous $N$ days; \textit{trend} refers to the same time in previous $N$ weeks. Meanwhile, it also uses other features (e.g., weather information) to make a good prediction with Convolutional Neural Network \cite{krizhevsky2012imagenet}.

We use separated model to process three different features of categories, then add all the output together to get the final result. Meanwhile, we use day of week and time of day embedding feature from a lookup feature table.

\begin{figure*}[t!]\centering
	\subfigure[][{\small Results in Region 1 (7 A.M.)}]{
		\scalebox{0.21}[0.21]{\includegraphics{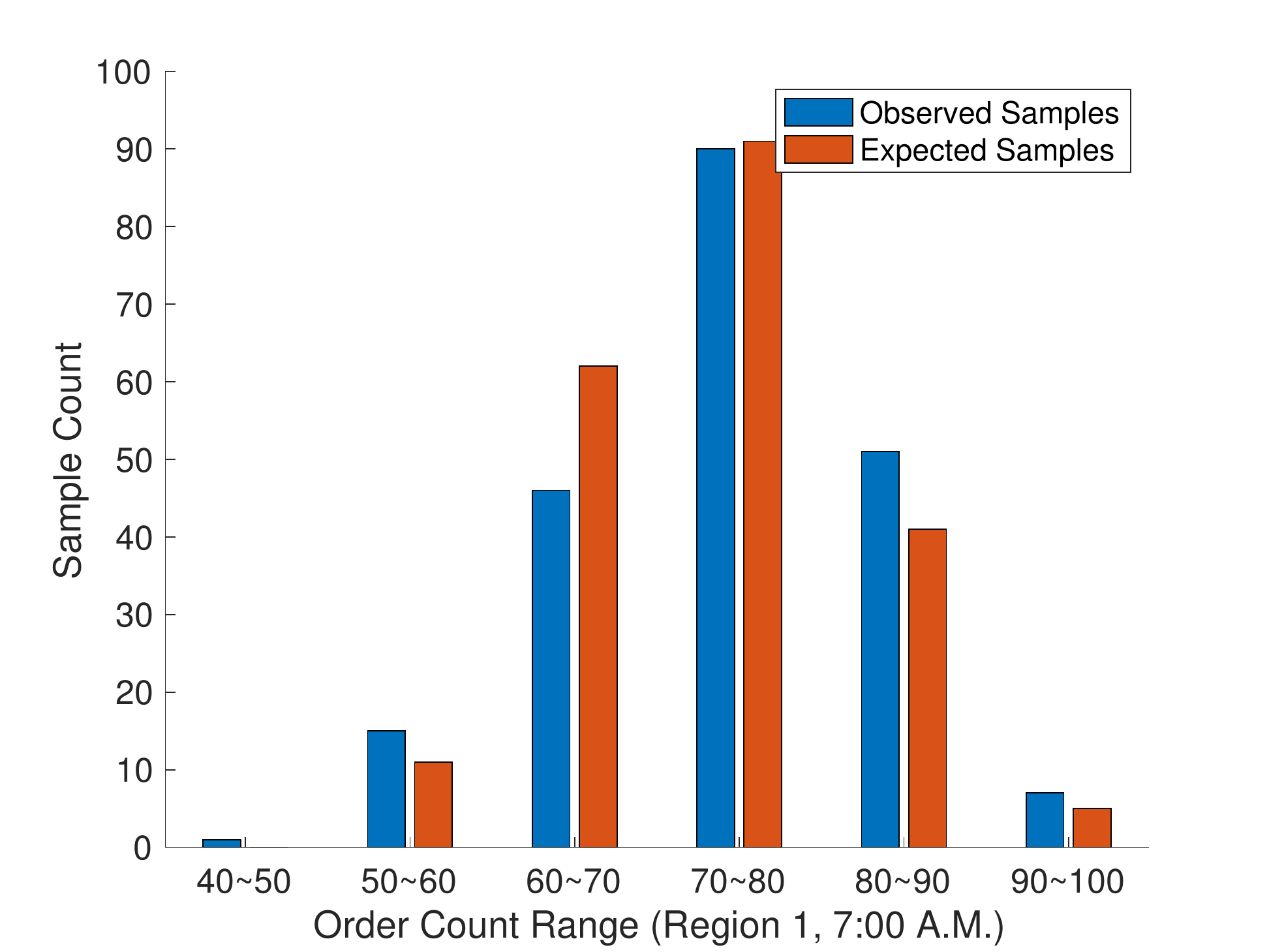}}
		\label{subfig:order_r1_7_bar}}
	\subfigure[][{\small Results in Region 1 (8 A.M.)}]{
		\scalebox{0.21}[0.21]{\includegraphics{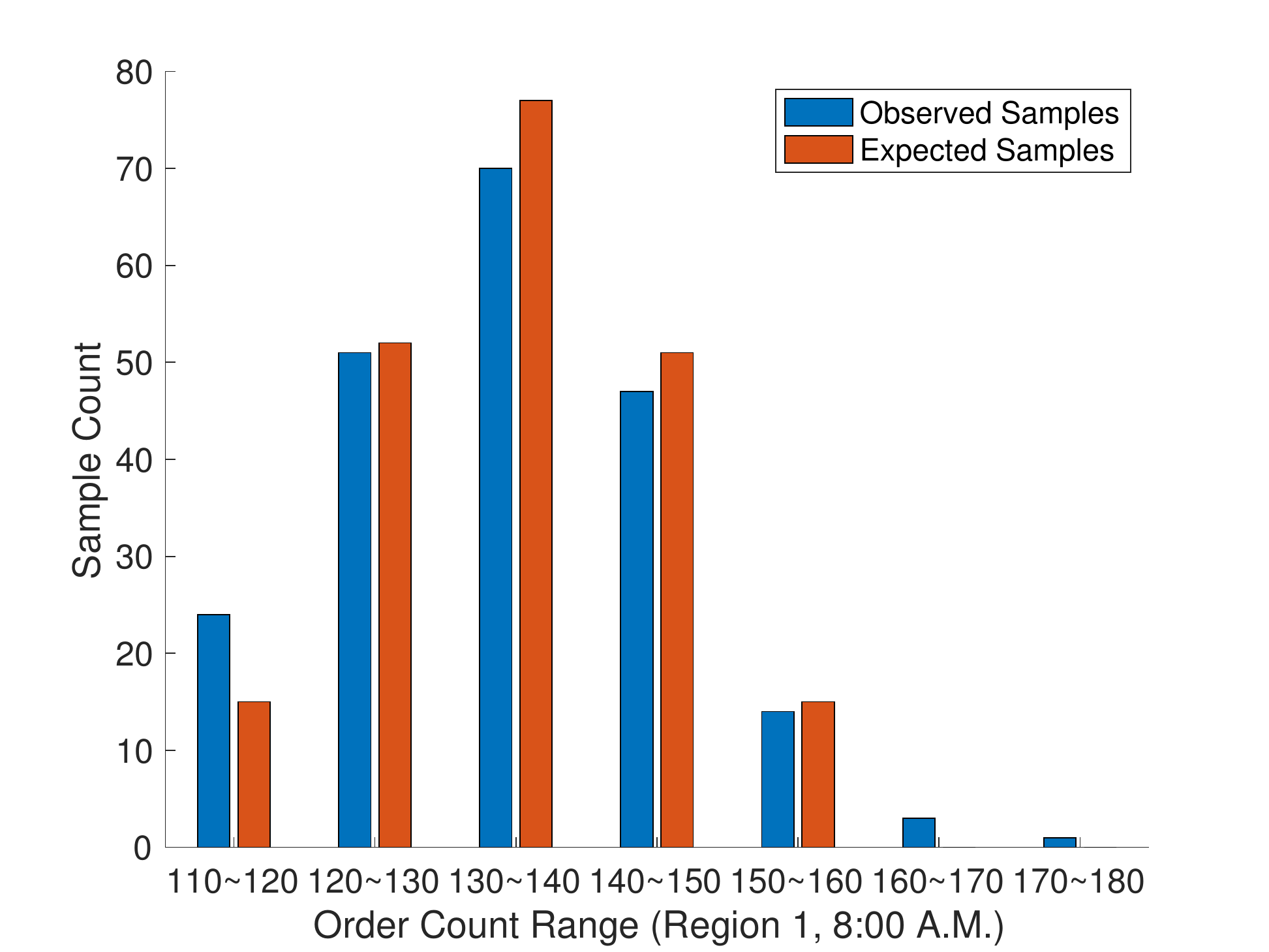}}
		\label{subfig:order_r1_8_bar}}
	\subfigure[][{\small Results in Region 2 (7 A.M.)}]{
		\scalebox{0.21}[0.21]{\includegraphics{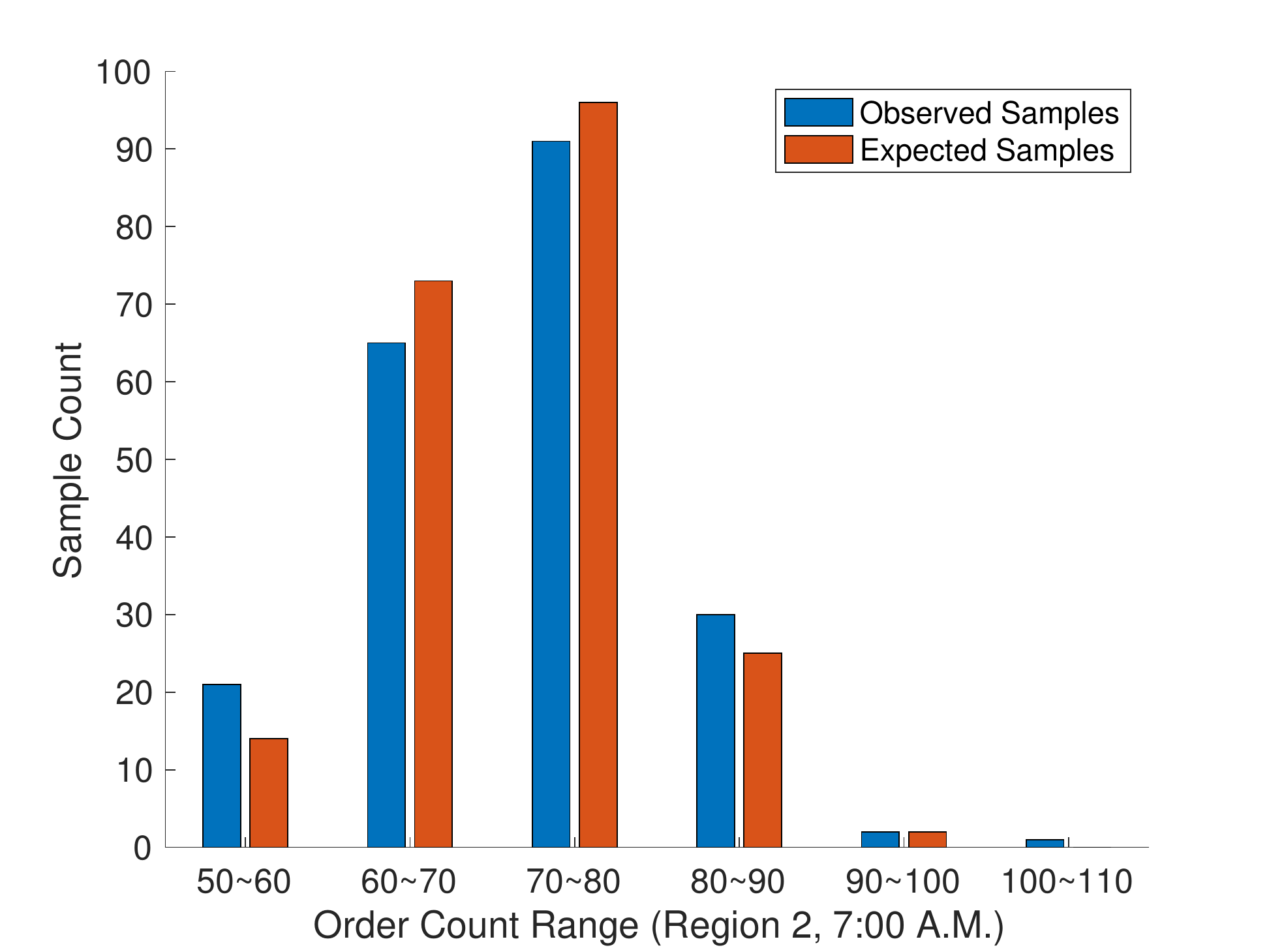}}
		\label{subfig:order_r2_7_bar}}
	\subfigure[][{\small Results in Region 2 (8 A.M.)}]{
		\scalebox{0.21}[0.21]{\includegraphics{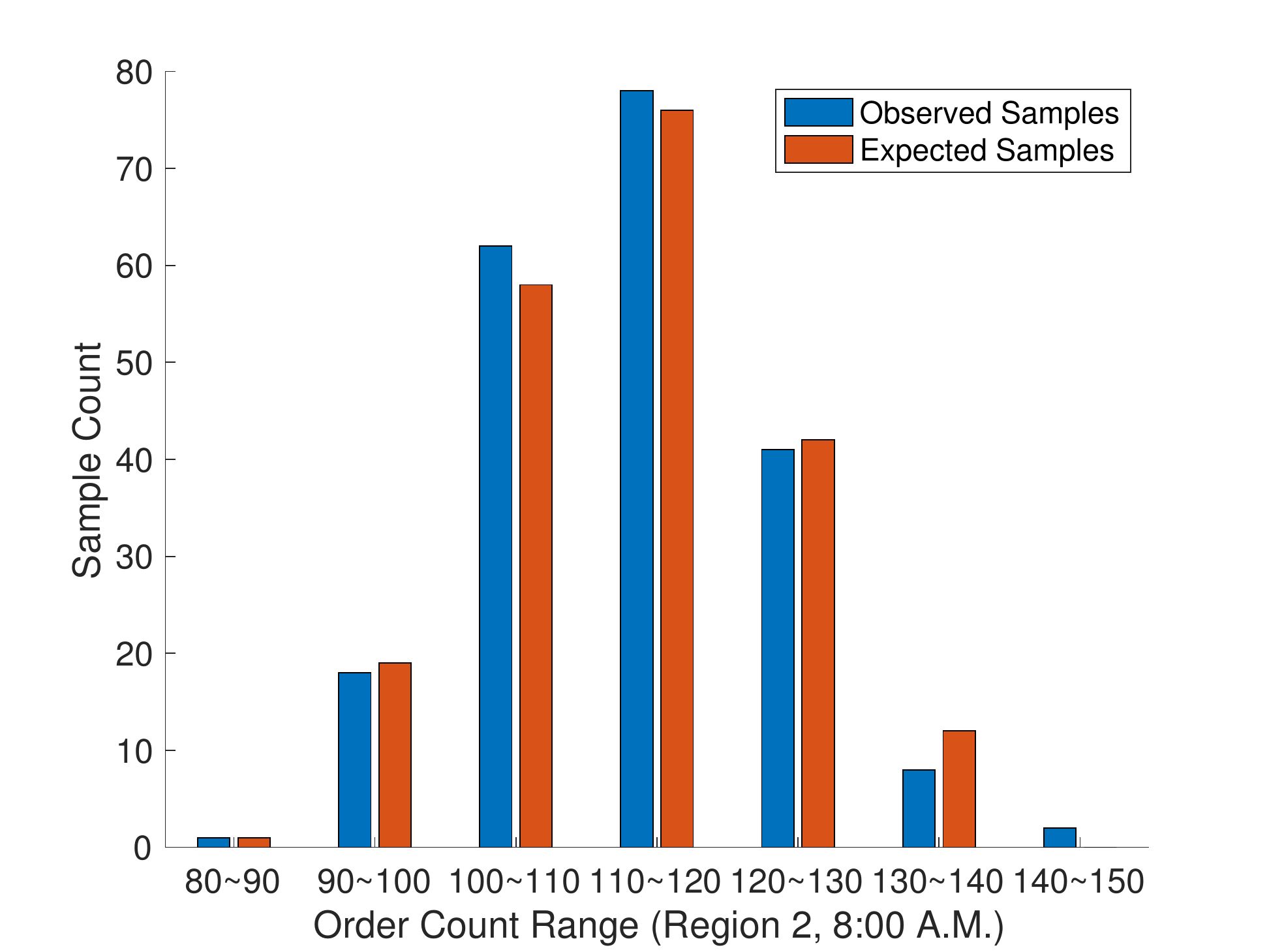}}
		\label{subfig:order_r2_8_bar}}\vspace{-1ex}
	\caption{\small Distribution of Order Quantity Samples in Different Regions and Time Periods.}
	\label{fig:order_poisson} 
\end{figure*}

\begin{figure*}[t!]\centering
	\subfigure[][{\small Results in Region 1 (7 A.M.)}]{
		\scalebox{0.21}[0.21]{\includegraphics{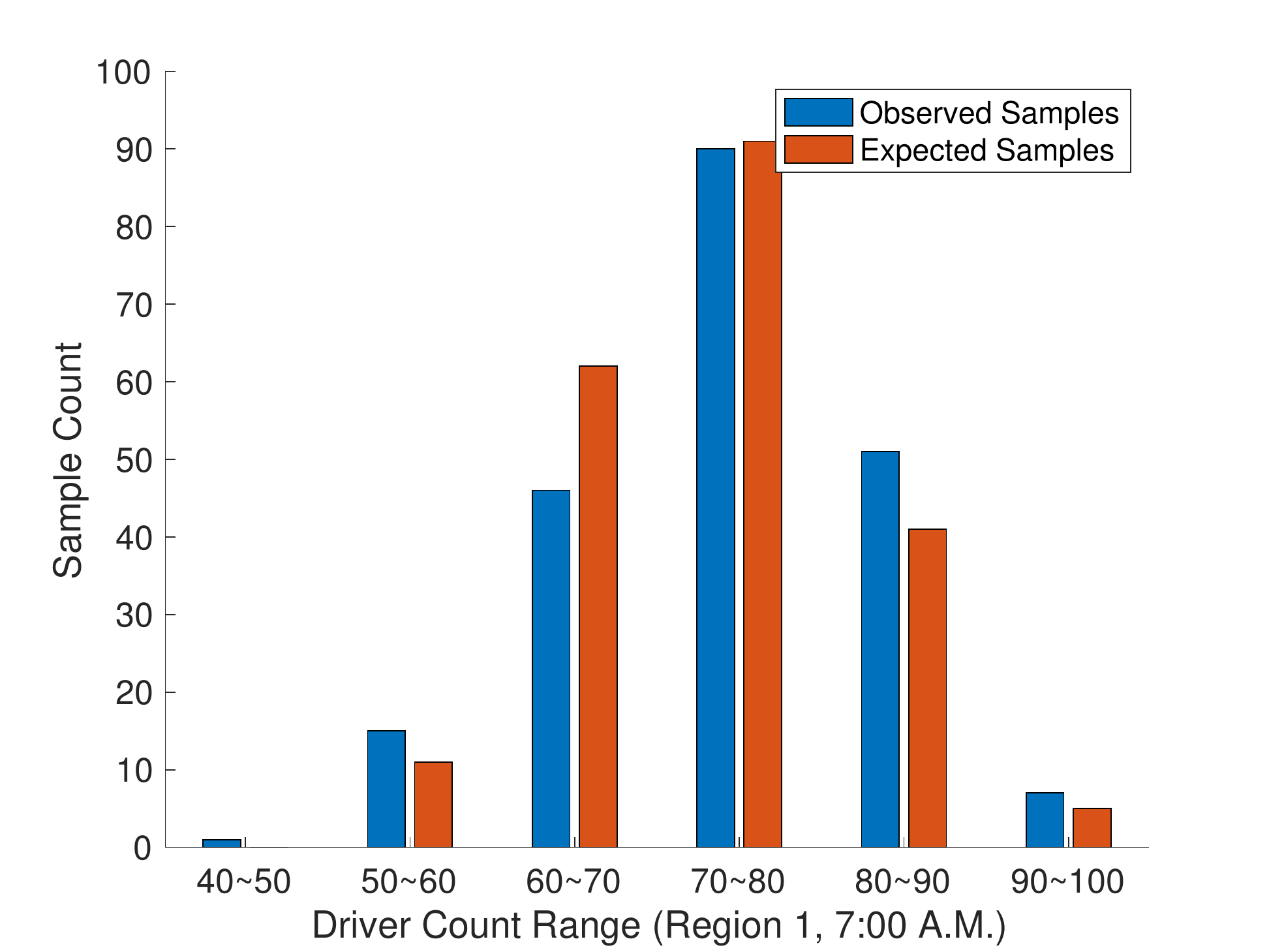}}
		\label{subfig:driver_r1_7_bar}}
	\subfigure[][{\small Results in Region 1 (8 A.M.)}]{
		\scalebox{0.21}[0.21]{\includegraphics{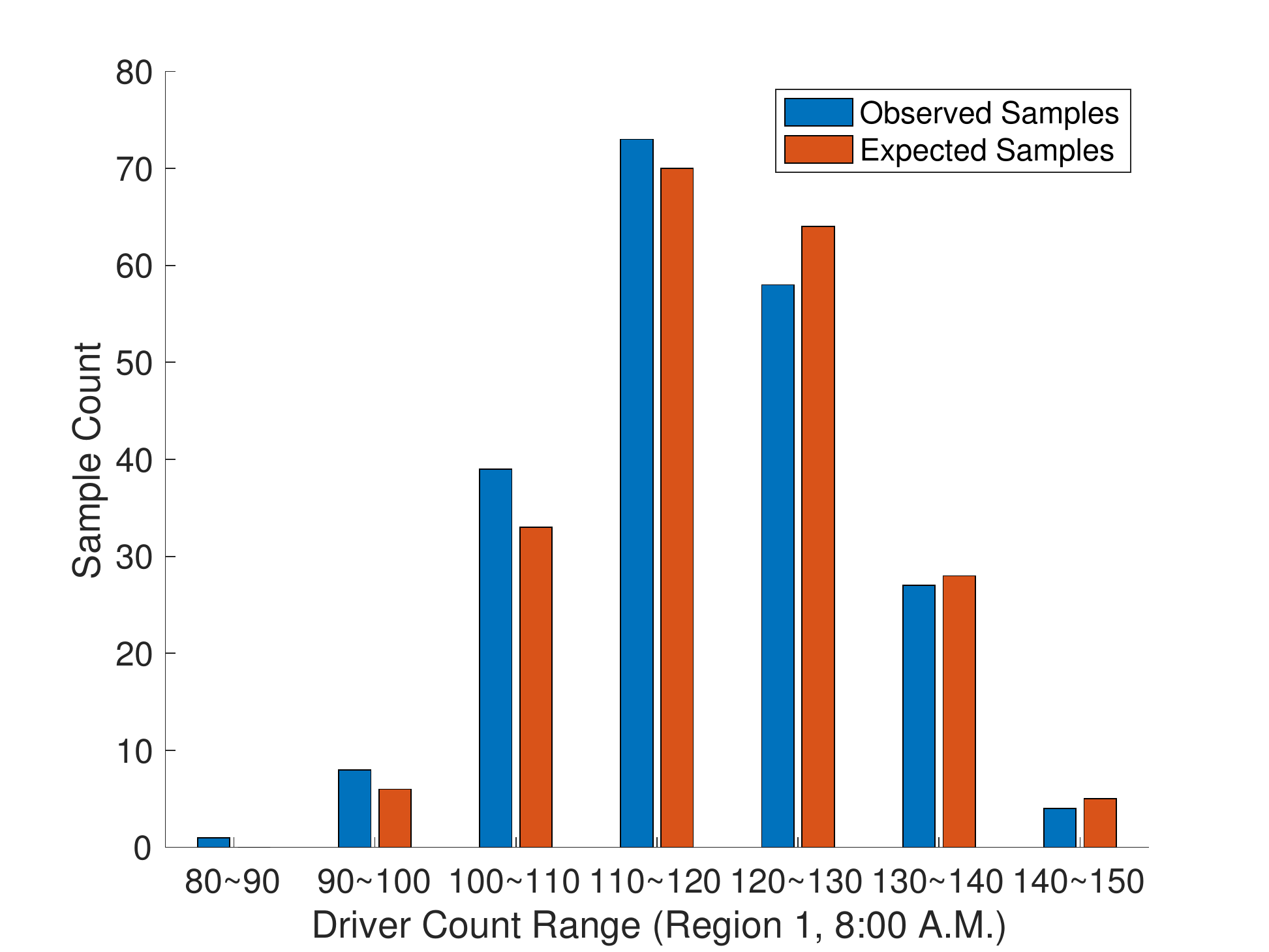}}
		\label{subfig:driver_r1_8_bar}}
	\subfigure[][{\small Results in Region 2 (7 A.M.)}]{
		\scalebox{0.21}[0.21]{\includegraphics{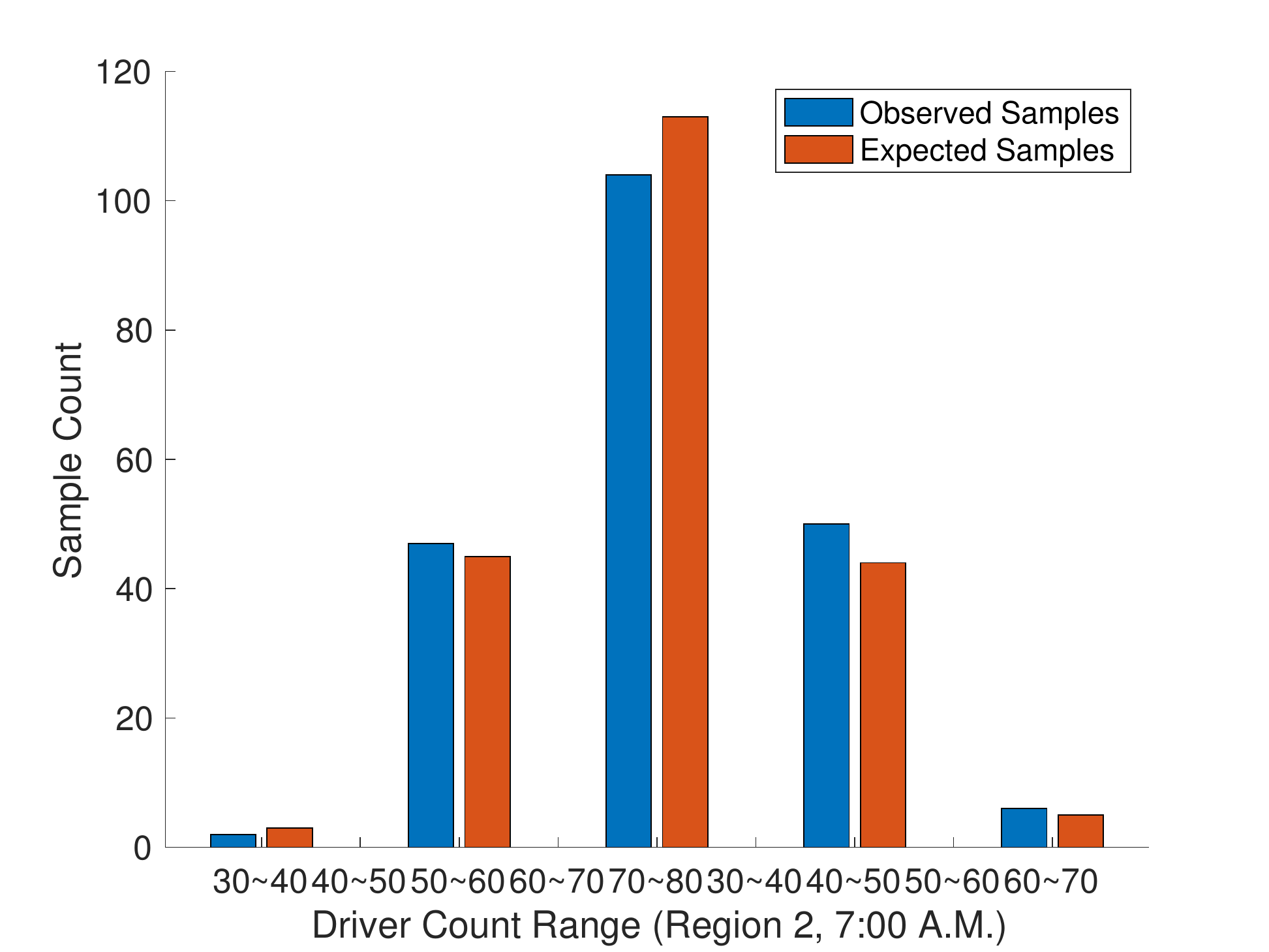}}
		\label{subfig:driver_r2_7_bar}}
	\subfigure[][{\small Results in Region 2 (8 A.M.)}]{
		\scalebox{0.21}[0.21]{\includegraphics{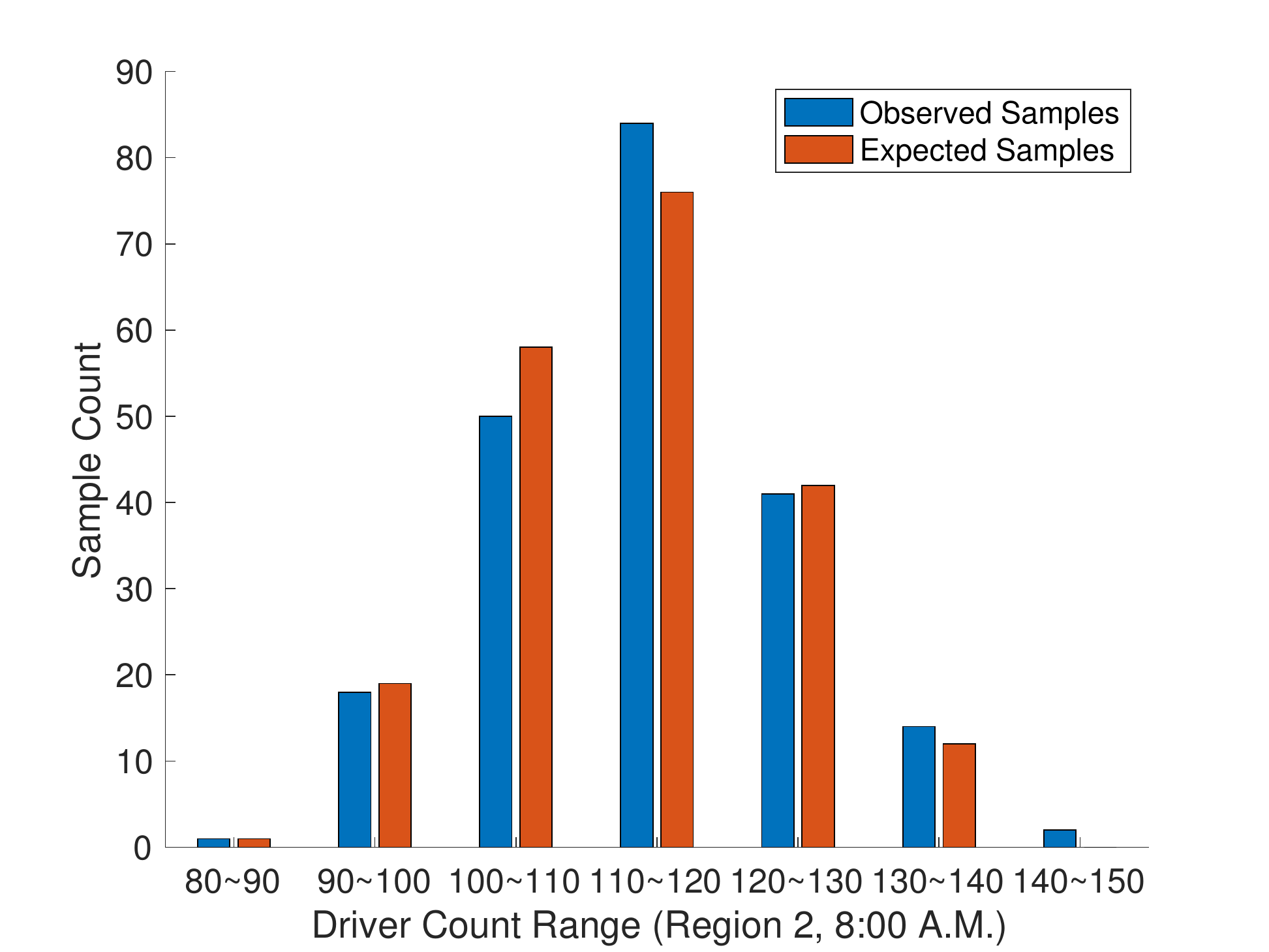}}
		\label{subfig:driver_r2_8_bar}}\vspace{-1ex}
	\caption{\small Distribution of Driver Quantity Samples in Different Regions and Time Periods.}
	\label{fig:driver_poisson} 
\end{figure*}

We use the first five months to train the demand count prediction model, and use the last month to test the performance of our dispatching algorithm. During prediction, we use 30 minutes as a time slot. Table \ref{tab:tab_exp} lists the statistic information of the taxi trip record data set.


\begin{table}[h!]
	\centering
	\caption{The statistics of the taxi trip dataset}\label{tab:tab_exp} 
	\begin{tabular}{c|c}
		& Yellow Taxi \\\hline\hline
		Days of Train Records  & 91\\
		Days of Evaluation Records    & 20\\
		Days of Test Records   & 10\\
		max Records Count Per  Slot & 853 \\
		min  Records Count Per  Slot & 0 \\
		max  Records Count Per  Zone &12017 \\
		min Records Count Per  Zone &0 \\
		\hline
	\end{tabular}
\end{table}

To evaluate the accuracy of our tested prediction models, we report the RMSE loss as the evaluation metric. To show the effectiveness of the model, we compare our model with the following prediction models:
\begin{itemize}
	\item HA. Historical Average calculates the mean of the order records in the previous 15 time slots  as the next order count.
	
	\item LR. Linear Regression model collects the order records in the previous 15 time slots to predict the next order count.
	
	\item GBRT. Gradient Based Regression Tree  \cite{friedman2002stochastic} model  collects the order records in the previous 15 time slots and uses non-parametric regression to predict the next  order count.
\end{itemize}

The evaluation results are shown on the table  \ref{tab:tab_eval}, we can see that the DeepST model performs best. Thus, we only use DeepST model in our experiments.

\begin{table}[h]
	\centering
	\caption{Results of the Demand Prediction Methods}\label{tab:tab_eval} 
	\begin{tabular}{c|c|c}
		& RMSE (\%) & Real RMSE (s) \\\hline\hline
		DeepST & 2.30 & 15.03 \\
		HA   & 7.46 & 48.21 \\
		LR  & 3.40 & 21.66 \\
		GBRT & 2.74 & 17.67 \\
		\hline
	\end{tabular}
\end{table}

Sometimes, the whole space is not divided into regular rectangle grids (e.g., New York City has its 262 irregular taxi zones), then  we cannot use Convolutional Neural Network directly to extract the adjacent information. To solve this issue, we replace the conv layer with Graph Convolutional layer, which comes from Graph Convolutional Network \cite{thomas2016semi}. We name the modified model as DeepST-GC (DeepST with Graph Convolutional Network). Next, we introduce the basic component of Graph Convolution Layer of DeepST-GC.

We represent the whole New York City as a graph $G=\{V,E\}$, where each taxi zone is represented as a vertex and their connectivities is regarded as edges. All nodes' features can be stored in a matrix $X$  whose dimension is $[N, F]$, $N$ means the number of taxi-zone, $F$ indicates the number of features of each taxi-zone. Connectives matrix, $\tilde{A}$, is added with an identity matrix, $I$, to form the adjacency matrix. The adjacency matrix helps to pass the features of nodes to their related nodes. Before we use the adjacency matrix, we normalize the matrix such that all the summations of each row is equal to 1. Let $D$ be the diagonal node degree matrix, then the normalized adjacency matrix would be:
\begin{equation}
A=D^{-\frac{1}{2}}(\tilde{A}+I)D^{-\frac{1}{2}}\notag
\end{equation}
The whole graph convolution computation is the formulation:
\begin{equation}
X^{t+1}=f(X^t,A)=\sigma(A^TX^tW^t),\notag
\end{equation}
\noindent where $X^t$ and $W^t$ are the input and weights of the $t$th layer of the neural network.

This modified DeepST-GC model can support the situations where the whole space is not evenly divided.

\section{Distribution of Data Set}

In this section, we test the validity of the assumption that the numbers of new orders and rejoined drivers in a given region obey Poisson distributions. We use chi-square ($\chi^2$) test \cite{greenwood1996guide}, a well-known and commonly used statistical hypothesis test, to verify the distribution of orders at different times in two example regions. Chi-square test can be used to test whether two variables are related or independent from one another or to test the goodness-of-fit between an observed distribution (i.e., the observed data) and a theoretical distribution of frequencies (i.e., the hypothesis). 

In this experiment, we tested the order quantity 
distribution in region 1 (\ang{-73.97} $\sim$ \ang{-74.01}, \ang{40.70} $\sim$ \ang{40.80}), and region 2 (\ang{-73.93} $\sim$ \ang{-73.97}, \ang{40.70} $\sim$ \ang{40.80}) in New York of at 7 A.M. and 8 A.M. respectively.

We assume that orders follow the same distribution over a short period (10 minutes), and set the random variable $X$ as the number of orders per minute. Then, for each region at each time, we use the number of orders for each minute as a sample.  Since there are 21 working days in January 2013, there are 210 samples, which are denoted as $X_{i}$ ($i$ = 1, 2, 3, 4..., 210). Our hypothesis is as follows:
$$
H:X\sim P(\alpha)
$$

We take the appropriate constant $a_i$ ($i$ = 1, 2, 3, ..., $r-1$), 
and decompose $(-\infty, +\infty)$ into several intervals, which are denoted $I_i$ ($i$ = 1, 2, 3, ..., $r$). 
Then we count the number of samples that belong to the interval $I_i$, and $\nu_{i}$ is called the observed frequency of $I_i$. Let $p_{i}$ be the probability 
that the value of the random variable $X$ belongs to the interval $I_{i}$ while the hypothesis holding, and the theoretical frequency of $I_{i}$ can be denoted as $np_{i}$. We define the statistic $k$ as follows:
$$
k=k\left(X_1,X_2,...,X_n;P\right)=\sum_{i=1}^{r}{\frac{\left(\nu_i-np_i\right)^2}{np_i}}
$$

\begin{table}[t!]
	\centering 
	{
		\caption{\small $\chi^2$ Test Results of Orders} \label{tab:chi_order}
		\begin{tabular}{ll|lll}
			region&time slot&$r$&$k$&$\chi_{r-1}^2\left(0.05\right)$ \\ \hline \hline
			region 1&7:00$\sim$7:10&7&8.7474&12.592\\
			region 1&8:00$\sim$8:10&7&6.3022&12.592\\
			region 2&7:00$\sim$7:10&6&7.2330&11.070\\
			region 2&8:00$\sim$8:10&5&7.7089&9.488\\
			\hline
			\hline
		\end{tabular}
	}
\end{table}

\begin{table}[t!]
	\centering
	{
		\caption{\small $\chi^2$ Test Results of Drivers}\label{tab:chi_driver}
		\begin{tabular}{ll|lll}
			region&time slot&$r$&$k$&$\chi_{r-1}^2\left(0.05\right)$ \\ \hline \hline
			region 1&7:00$\sim$7:10&6&7.7964&11.070\\
			region 1&8:00$\sim$8:10&7&8.8335&12.592\\
			region 2&7:00$\sim$7:10&5&8.8526&9.488\\
			region 2&8:00$\sim$8:10&5&6.7923&9.488\\
			\hline
			\hline
		\end{tabular}
	}
\end{table}

K. Pearson \cite{pearson1900x} proved that if $H$ is assumed to be true, then when the sample size is $\infty$, the distribution of statistic $k$ converges to $\chi_{r-1}^2$, that is, the distribution of $k$ is chi-square distribution with the degree of freedom being $r-1$. In this case that we set the confidence as $\beta$, we can decide whether to deny $H$ by judging whether $k>\chi_{r-1}^2\left(\beta\right)$ is true.

As shown in Table \ref{tab:chi_order}, the value of $k$ is smaller than $\chi_{r-1}^2\left(0.05\right)$, which indicates that we cannot reject the null hypothesis with the confidence coefficient of 0.05 (i.e., the null hypothesis is significant; the number of orders follow a Poisson distribution in a high probability).

To clearly illustrate that the observed distribution is close to the expected distribution, Figure \ref{fig:order_poisson} shows the expected and observed distribution of order quantity samples in different regions and periods obtained in our experiment. The observed order distribution  fits well with the expected distribution, which verify that the order quantity obeys the Poisson distribution.

In call-hailing platforms (e.g., DiDi Chuxing), regular drivers (i.e., the ones work more than 5 days a month) usually work for more than 8 hours. Then, the destinations of orders can be considered as the rejoined drivers' birth-location. We also conduct the chi-square test on the destinations of orders (the location of rejoined drivers). The results are shown in Table \ref{tab:chi_driver}. We can achieve a same conclusion: the rejoined drivers also follow a Poisson distribution. 
Figure \ref{fig:driver_poisson} shows the expected and observed distribution of driver quantity samples in different regions and periods obtained in our experiment. The similarity between observed and expected driver distributions  verifies that the driver quantity also is a Poisson distribution.

\section{Results of maximizing the number of total served orders}
\label{sec:max_order_count}
Our queueing-theoretic framework can also support the optimization goal of maximizing the number of total served orders. We can simply modify our idle ratio greedy algorithm to handle the new goal of maximizing the number of total served orders. Specifically, we modify IRG to greedily select the order with the minimum summation of travel cost and idle time. We call the modified algorithm shortest total time greedy algorithm (SHORT). Then, we conduct a new set of experiments to test the efficiency of our SHORT algorithm. We still use the setting of experiments in Section \ref{sec:experimental}. 

\begin{figure}[h!]\centering\vspace{-2ex}
		\subfigure{
		\scalebox{0.12}[0.12]{\includegraphics{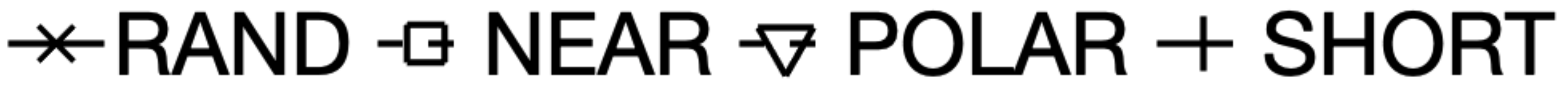}}}\hfill\\
	\addtocounter{subfigure}{-1}\vspace{-2ex}
	\subfigure[][{\small Effect of Number of Drivers (n)}]{
		\scalebox{0.2}[0.2]{\includegraphics{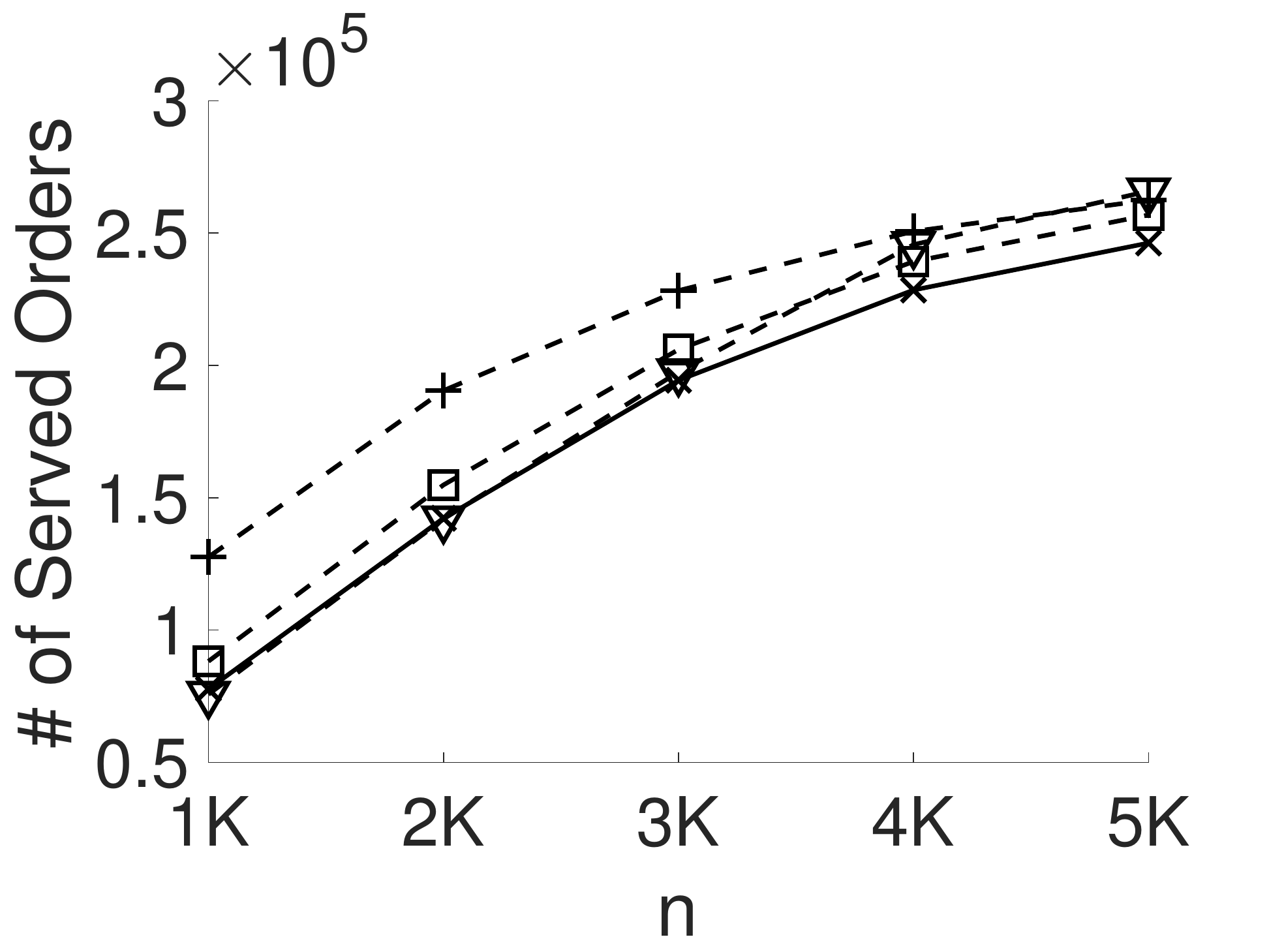}}
		\label{subfig:driver_c}}
	\subfigure[][{\small Effect of Time Window ($t_c$)}]{
		\scalebox{0.2}[0.2]{\includegraphics{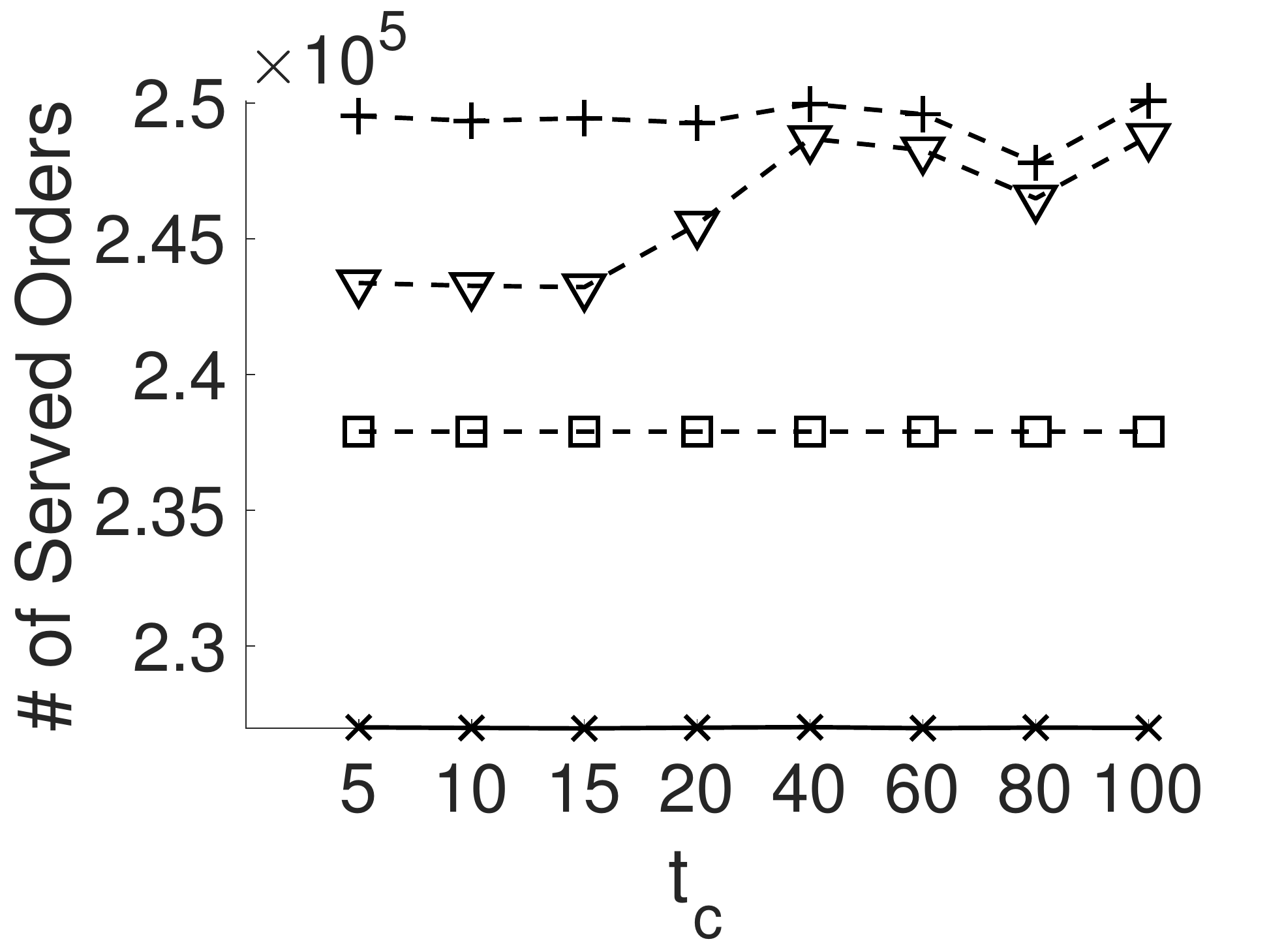}}
		\label{subfig:time_c}}\vspace{-2ex}
	\subfigure[][{\small Effect of Batch Length ($\Delta$)}]{
		\scalebox{0.2}[0.2]{\includegraphics{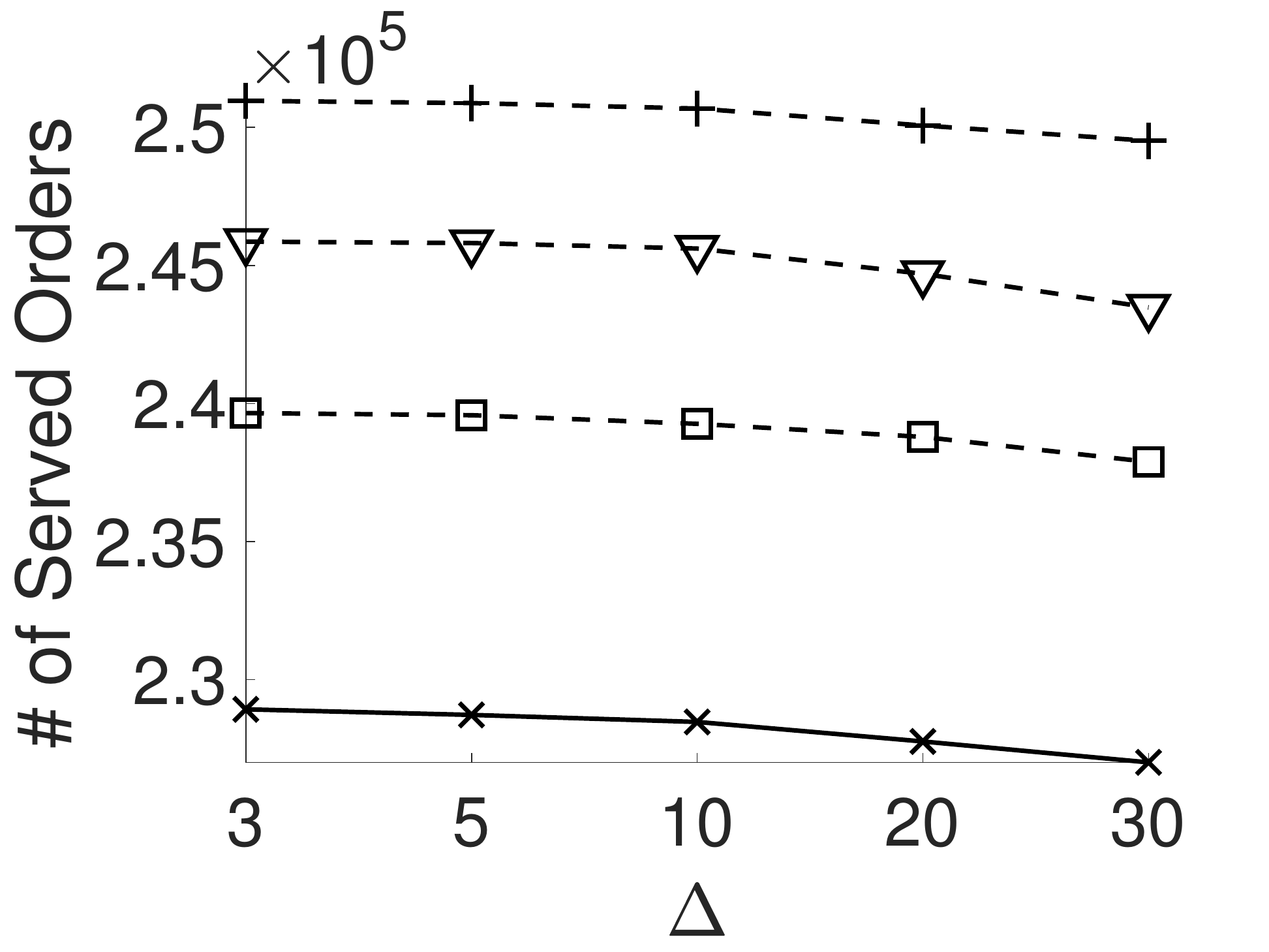}}
		\label{subfig:batch_c}}\vspace{-2ex}
	\subfigure[][{\small Effect of Base Waiting Time ($\tau$)}]{
		\scalebox{0.2}[0.2]{\includegraphics{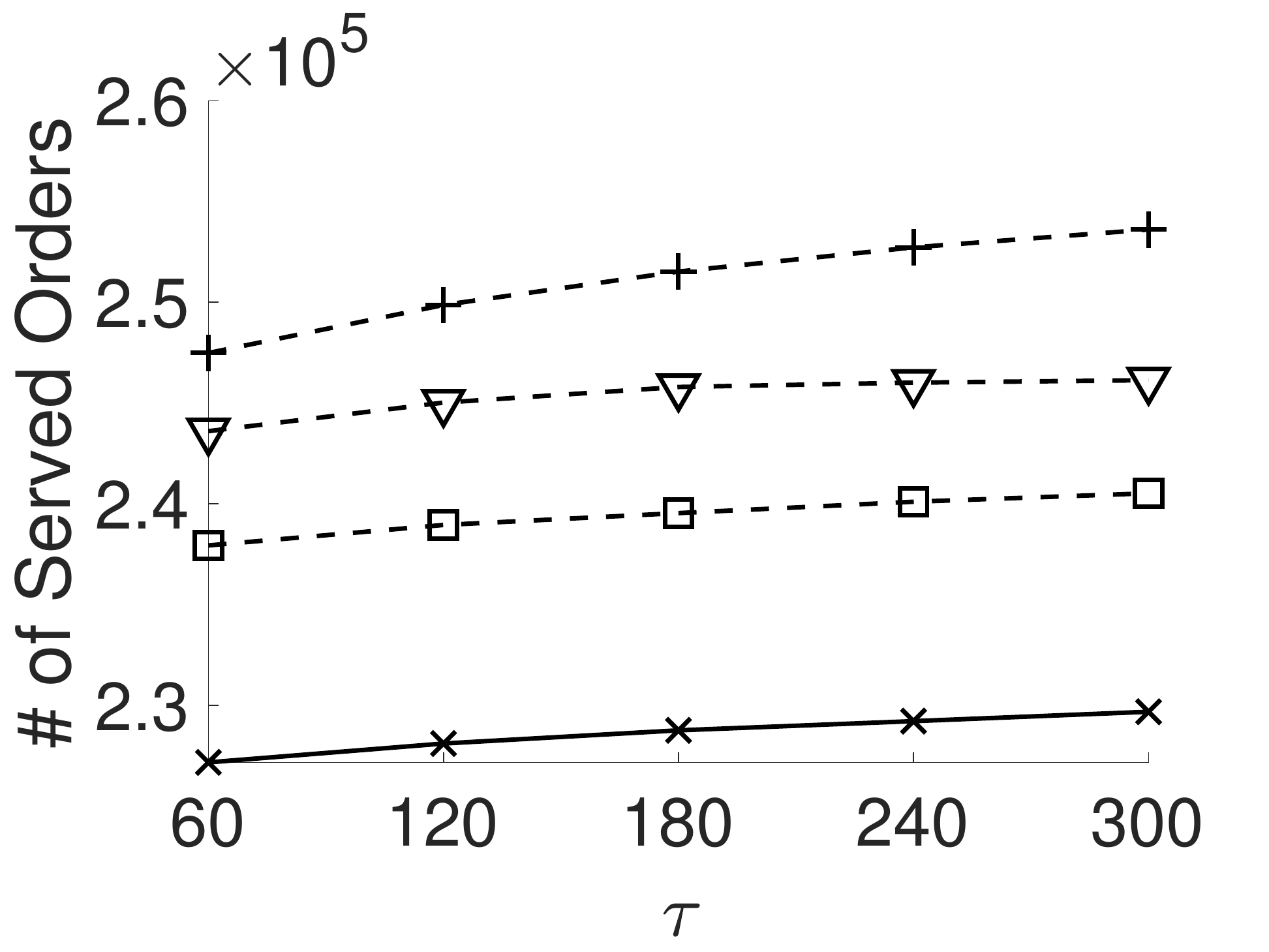}}
		\label{subfig:wait_c}}\vspace{-2ex}
	\caption{\small Results of Total Served Orders.}\vspace{-2ex}
	\label{fig:order_count} 
\end{figure}

To clearly show the results, we only report the results of RAND, NEAR, POLAR and SHORT on the number of total served orders in Figure \ref{fig:order_count}. As shown in Figure \ref{subfig:driver_c}, when the number of drivers increases, the numbers of total served orders for all the tested algorithms all increases. Our SHORT can achieve the highest number of served orders. NEAR is better than POLAR when $n$ is 1K$\sim$3K. When $n$ increases to 4K$\sim$5K, POLAR can assign more orders than NEAR. Figure \ref{subfig:time_c} shows the results of tested approaches on varying the time window of estimating the arrival rate of new riders and rejoined drivers. Our SHORT can always finish more orders than other tested approaches. POLAR is better than NEAR and RAND. Similar situations happen in the results of varying the batch length (in Figure \ref{subfig:batch_c}) and the base waiting time (in Figure \ref{subfig:wait_c}).

\end{document}